\newcommand{\be}{\begin{equation}}
\newcommand{\ee}{\end{equation}}
\newcommand{\bea}{\begin{eqnarray}}
\newcommand{\eea}{\end{eqnarray}}
\newcommand{\MB}{\left[\begin{array}}
\newcommand{\ME}{\end{array}\right]}
\newcommand{\ei}{\end{itemize}}
\newcommand{\bi}{\begin{itemize}}
\newcommand{\E}{\mathbb{E}}
\newcommand\Tau{\mathcal{T}}
\newtheorem{theorem}{Theorem}
\newtheorem{lemma}[]{Lemma}
\newtheorem{proposition}[]{Proposition}
\newtheorem{corollary}[]{Corollary}
\newtheorem{remark}[]{Remark}
\newtheorem{assumption}[]{Assumption}
\begin{document}

\title{Data Dependent Regret Guarantees Against\\ General Comparators for Full or Bandit Feedback} 
\author{Kaan Gokcesu, Hakan Gokcesu}
\maketitle

\flushbottom

\begin{abstract}
	We study the adversarial online learning problem and create a completely online algorithmic framework that has data dependent regret guarantees in both full expert feedback and bandit feedback settings. We study the expected performance of our algorithm against general comparators, which makes it applicable for a wide variety of problem scenarios. Our algorithm works from a universal prediction perspective and the performance measure used is the expected regret against arbitrary comparator sequences, which is the difference between our losses and a competing loss sequence. The competition class can be designed to include fixed arm selections, switching bandits, contextual bandits, periodic bandits or any other competition of interest. The sequences in the competition class are generally determined by the specific application at hand and should be designed accordingly. Our algorithm neither uses nor needs any preliminary information about the loss sequences and is completely online. Its performance bounds are data dependent, where any affine transform of the losses has no effect on the normalized regret.  
\end{abstract}

\section{Introduction}\label{sec:intro}

\subsection{Preliminaries}
In machine learning literature \cite{jordan2015machine,mohri2018foundations}, the field of online learning \cite{shalev2011online} is extensively researched across various domains such as game theory \cite{chang,tnnls1}, control theory \cite{sw2,sw4,tnnls3}, decision theory \cite{tnnls4,freund1997}, and computational learning theory \cite{comp1,comp2}. This field is heavily relied upon for its universal prediction perspective \cite{merhav}, which makes it a suitable choice for data and signal processing \cite{sw3,signal2,moon,signal1,sw5,gokcesu2018adaptive}, especially in sequential prediction and estimation problems \cite{gHierarchical,singer,singer2} like density estimation and anomaly detection \cite{gAnomaly,willems,coding1,coding2,gDensity}. Some of its most notable applications include multi-agent systems \cite{sw1,vanli,tekin2014distributed} and reinforcement learning problems \cite{bandit2,auerExp,audibert,auer,gBandit,bandit1,tekin2,exptrade,auerSelf,reinInt,mannucci2017safe}.

Specifically, the study of prediction with expert advice and online forecasting in adversarial scenarios has become an area of considerable interest, where the central aim is to minimize or maximize a certain loss or reward within a given environment \cite{cesabook}. In the prediction with expert advice problem \cite{signal2,signal1,singer,moon,singer2}, the focus is on a set of $M$ expert actions (e.g., algorithms) that can be employed for a specific task. At each round of the decision-making process, one of these expert actions is selected, and the corresponding loss or gain is received. The primary objective of research in this field is to design randomized online algorithms that can achieve low 'regret', which is defined as the difference between the expected loss and the loss of a strategy of expert selections \cite{littlestone1994, vovk1998}. This goal is essential in minimizing the effect of mistakes in decision-making. This problem is often encountered in real-world scenarios, such as in finance, where investors seek to maximize their returns by selecting the best investment options from a pool of available choices.

In some cases, the fundamental exploration-exploitation dilemma is encountered, which is most thoroughly studied in the multi-armed bandit problem \cite{cesa-bianchi}. The bandit setting represents a limited feedback version of the prediction with expert advice, where the main goal is to minimize or maximize a loss or reward by sequentially selecting actions while observing only the loss or reward of the chosen action \cite{cesabook}.

The concept of bandit feedback has gained significant attention due to its versatility in various applications \cite{bandit2,cesa-bianchi,audibert,auer,auerExp,banditTNN,zheng,gBandit}, such as online advertising \cite{li2010contextual}, recommender systems \cite{tekin2014distributed,tang2014ensemble,luo2015nonnegative}, clinical trials\cite{hardwick1991bandit}, and cognitive radio\cite{lai2008medium,gai2010learning}. We study the problem within the context of online learning, where observations are sequentially processed from an adversarial environment without any statistical assumptions on the loss sequences \cite{huang2011adversarial}. We utilize the competitive algorithm perspective to approach the problem, drawing upon existing research in the field of competitive analysis \cite{merhav,vural2019minimax,vovk,littlestone1994,neyshabouri2018asymptotically,vovk1998,gokcesu2020recursive}. Under this perspective, an algorithm's performance is measured against a competition class of selection strategies or comparators, with the goal of achieving a cumulative loss as close as possible to the cumulative losses of the arm selection sequences in the competition \cite{cesa2007}. The difference between the cumulative loss of the algorithm and the best selections on the same loss sequence is referred to as "regret" \cite{cesabook}. In the competitive algorithm framework, an algorithm does not require prior knowledge of the actions corresponding to each expert or arm, as they may even be separately running black-box algorithms that learn throughout time instead of specific actions. The only necessary prior knowledge is the number of actions, allowing an algorithm to associate a selection with each action based on observed sequential performance.

The problem of adversarial online learning, in which a player competes against the best fixed selection, is subject to a regret lower bound of $\Omega(\sqrt{T})$ in a game of $T$ rounds \cite{cesa-bianchi}. When competing against sequences in an arbitrary competition class, the regret lower bound implies a minimax bound of $\Omega(\sqrt{W_*T})$, where $W_*$ represents the complexity of the competition sequence. The complexity may depend on various factors, such as the number of switches \cite{cesa-bianchi,auer,auerExp,audibert,doubling_trick} or contextual regions \cite{willems1995context,sadakane2000implementing,willems1996context,csiszar2006context,dumont2014context,kozat2007universal,vanli2014comprehensive,ozkan2016online}, and may reflect a prior on the arm sequences \cite{comp2,gokcesu2020generalized}. State-of-the-art algorithms achieve expected minimax regret of $O(\sqrt{W_*T})$ when $W_*$ is known beforehand. They can also achieve a regret bound of $O(\sqrt{WT})$ when competing against a comparator class with complexities bounded by $W$. However, these algorithms are limited in that they assume losses are bounded, typically within the range of $[0,1]$. As a result of their lack of adaptivity, the regret bounds may not be fundamental when losses have unknown bounds.

\subsection{Comparisons}
In the field of prediction with expert advice, researchers have been searching for fundamental regret bounds that are invariant to translation and scaling for several decades. The exponentially weighted average forecaster \cite{littlestone1994,vovk1998} provides a basic regret bound against fixed competition that is dependent on a known universal loss bound and the number of rounds. 

The work in \cite{freund1997} showed that the exponentially weighted forecaster algorithm can achieve a first-order regret bound in one-sided games, where all losses have the same sign. 

A direct analysis by the work of \cite{allenberg2004} showed that the weighted majority algorithm achieves the first-order regret bound in signed games without the need for one-sidedness in the losses. These approaches are scale equivariant, but not translation invariant, since they require some information about the losses beforehand. 

The work of \cite{cesa2007} solved this shortcoming by creating second-order regret bounds for signed games that depend on the sum of squared losses and do not require any a priori knowledge. However, their results are not applicable in the bandit feedback setting. 

Similarly, \cite{ito2021parameter} derived data-dependent bounds for bandits, but they still require the losses to be universally bounded in the $[0,1]$ region. Additionally, these approaches mainly focus on fixed competition and have limited applicability to arbitrary comparators. 

The approach in \cite{gokcesu2020generalized} extends data-dependent regret bounds to a generalized framework that can compete against different choices of competitions in the problem of prediction with expert advice. However, their results are not applicable in the bandit setting due to the limited feedback. 

While the algorithms in \cite{gokcesu2021generalized,gokcesu2022second} circumvent this problem, their regret bounds are not bona-fide data-dependent.

\subsection{Contributions}
In light of the limitations of existing works, we propose an algorithm to address the problem of competing against arbitrary comparator sequences with data-dependent regret bounds. Our framework enables us to implement the desired competition class in a scalable and tractable manner. Our algorithm does not require explicit knowledge of the actions presented, and only relies on the prior knowledge of the number of available actions and the desired competition class. Our algorithm selects the corresponding action sequentially based solely on past performance and is parameter-free. Our regret bounds are completely data-dependent, i.e., depend upon the specific loss ranges in the realized loss sequence. 

\subsection{Organization}
The organization of the paper is as follows.
\begin{itemize}
	\item In \autoref{sec:problem}, we first describe the problem setting. 
	\item In \autoref{sec:method}, we detail the methodology and our algorithm. We provide the performance results and regret analysis in \autoref{sec:regret}. 
	\item In \autoref{sec:full}, we provide the implementation in full feedback setting.
	\item In \autoref{sec:bandit}, we provide the implementation for the bandit feedback setting.
\end{itemize}

\section{Problem Description}\label{sec:problem}
The prediction with expert advice, mixture-of-experts or multi-armed bandits problem is often formulated as a game between a decision-maker and an environment. The decision-maker must make a sequence of choices, and at each stage, they receive some feedback about the outcome of their decision. The central aim is to minimize or maximize a certain loss or reward within the given environment.

In this problem, we have a set of $M$ expert actions or algorithms that can be employed for a specific task such that $m\in\{1,\ldots,M\}$. At each round $t$ of the decision-making process, one of these expert actions is randomly selected and the corresponding loss is incurred. 

We denote our selection at time $t$ as $i_t$, which drawn from a probability distribution $q_t$ given by\begin{align}
	q_t\triangleq[q_{t,1},\ldots,q_{t,M}],\label{eq:qt}
\end{align}
i.e., $i_t\sim q_t$, and the probability of selection $i_t$ is $q_{t,i_t}$. Based upon our online selection $i_t$, we incur its corresponding loss $l_{t,i_t}$.

There are two variations of this problem: the full feedback setting, in which we observe the loss of each expert, i.e., $l_{t,m}, \forall m$ irrespective of the selection $i_t$ that we made; and the bandit feedback setting, where only the loss of the selected expert $i_t$, i.e., $l_{t,i_t}$ is observed. In both cases, our objective is to minimize regret, which is defined as the difference between our cumulative loss and that of a strategy chosen from the set of comparators. In a $T$ round game, let the selections made by the decision maker be cumulatively given by
\begin{align}
	I_T=[i_1,\ldots,i_T],
\end{align}
and its corresponding loss sequence be given by
\begin{align}
	L_{I_T}=[l_{1,{i_1}},\ldots,l_{T,i_T}].
\end{align}

To evaluate the decision-maker's performance, we define a selection sequence and its corresponding loss sequence for each comparator. Similarly, we define $S_T$ as the selection sequence of a comparator as
\begin{align}
	S_T=[s_1,\ldots,s_T].\label{eq:St}
\end{align}
such that each $s_t\in\{1,2,\ldots,M\}$ for all $t$. Hence, the corresponding loss sequence of the comparator $S_T$ is
\begin{align}
	L_{S_T}=[l_{1,{s_1}},\ldots,l_{T,s_T}].
\end{align}

We denote the cumulative loss up to round $T$ for each strategy $I_T$ and $S_T$ as $C_{I_T}$ and $C_{S_T}$, respectively. The regret of strategy $I_T$ relative to strategy $S_T$ is then defined as the difference between their cumulative losses. Let the expected regret against $S_T$ be $R_{S_T} =\E\left[C_{I_T} - C_{S_T}\right]$, where the expectation is over the selections $I_T$. The decision-maker's goal is to design an algorithm that achieves low expected regret against the comparator $S_T$. For data-dependent regret bounds, we want $R_{S_T}$ to be a function of all $l_{t,m}, 1\leq t\leq T,\forall m$ and the complexity $W_{S_T}$ of the comparator $S_T$.

\section{The Algorithm}\label{sec:method}

	\begin{algorithm}[!t]
		\caption{Generalized Selection Algorithm}\label{alg:framework}
		\small{\begin{algorithmic}[1]
				\For{$t=1 \to T$}
				\For{$m\in\{1,\ldots,M\}$}
				\State $$q_{t,m}=(1-\epsilon_{t})p_{t,m}+\epsilon_{t}\frac{1}{M}$$
				\EndFor
				\State Select $i_t\in\{1,\ldots,M\}$ with $q_t=[q_{t,1},\ldots,q_{t,M}]$
				\State Receive $\phi_{t}=[\phi_{t,1},\ldots,\phi_{t,M}]$
				\For{$\lambda_t\in\Omega_t$}
				\State $$z_{\lambda_t}=w_{\lambda_t}\exp(-\eta_{t-1}\phi_{t,\lambda_t(1)})$$
				\EndFor
				\For{$\lambda_{t+1}\in\Omega_{t+1}$}
				\State $$w_{\lambda_{t+1}}=\sum_{\lambda_{t}\in\Omega_t}\Tau(\lambda_{t+1}|\lambda_t)z_{\lambda_t}^{\frac{\eta_{t}}{\eta_{t-1}}}$$
				\EndFor
				\For{$m\in\{1,\ldots,M\}$}
				\State $$w_{t+1,m}={\sum_{\lambda_{t+1}(1)=m}w_{\lambda_{t+1}}}$$
				\EndFor
				\For{$m\in\{1,\ldots,M\}$}
				\State $$p_{t+1,m}=\frac{w_{t+1,m}}{\sum_{m'=1}^{M}w_{t+1,m'}}$$
				\EndFor
				\EndFor
		\end{algorithmic}}
	\end{algorithm}
	
	To construct our algorithm, we initiate by utilizing equivalence classes, which allow us to efficiently merge the arm selection sequences $S_t$ at each time step $t$.
	
	\subsection{Universal Prediction Perspective}
	Our proposed algorithm operates by assigning a weight $w_{S_t}$ to each of the possible arm selection sequences $S_t$ in an implicit manner. These weights are then utilized to determine the weights of each individual arm $m$ at time $t$ denoted as $w_{t,m}$. To obtain $w_{t,m}$, we collect all the sequences that suggest selecting arm $m$ at time $t$ and sum their corresponding weights, i.e.,
	\begin{align}
	w_{t,m}\triangleq\sum_{S_t(t:t)=m}^{} w_{S_t},\label{wmt}
	\end{align}
	where the vector $S_t(i:j)$ is defined as the vector containing elements from $i^{th}$ to $j^{th}$ positions of $S_t$, e.g., $S_t(t:t)=s_t$ refers to the selection of the sequence $S_t$ at time $t$. The merging of these sequences is performed in a way that naturally leads to the achievement of the best sequence's performance due to its universal perspective, as proposed in \cite{merhav}. We normalize the resulting values to obtain the algorithmic probabilities
	\begin{align}
	p_{t,m}=\frac{w_{t,m}}{\sum_{m'}w_{t,m'}},\label{pmt}
	\end{align}
	and selection probabilities $q_{t,m}$ are given by mixing $p_{t,m}$ with a uniform distribution as 
	\begin{align}
		q_{t,m}=(1-\epsilon_t)p_{t,m}+\epsilon_t\frac{1}{M},\label{qtm}
	\end{align}
	where $\epsilon_t$ is a time dependent parameter.
	
	\subsection{Equivalence Classes}
	It is worth noting that the computation of $p_{m,t}$ in \eqref{pmt} is directly influenced by $w_{S_t}$ because of \eqref{wmt}. These weights are calculated implicitly using equivalence classes, where we update specific sequence weights simultaneously.
	To establish the equivalence classes, we define a class parameter $\lambda_t$, which will be used to group the sequences $S_t$. Let
	\begin{align}
		\lambda_t=[m, \ldots],\label{lamt}
	\end{align}
	where the first parameter $\lambda_t(1)$ is arbitrarily determined as the selection $s_t$ at time $t$. The parameter $\lambda_t$ specifies which sequences are incorporated in its equivalence class, i.e., the equivalence class with parameter $\lambda_t$ comprises all sequences $S_t$ that behave consistently with $\lambda_t$. The parameters of $\lambda_t$ determine the number of equivalence classes and the sequences they represent. We define $\Omega_t$ as the vector space consisting of all $\lambda_t$ vectors as $\lambda_t\in\Omega_t, \enspace\forall\lambda_t.$ It is worth noting that $\Omega_t$ does not necessarily encompass all possible sequences at time $t$. Instead, it may encompasses only those sequences that are of interest to us and that we want to compete against. Moreover, we define $\Lambda_t$ as the parameter sequence for any arbitrary sequence up to time $t$ as $\Lambda_t\triangleq\{\lambda_1,\ldots,\lambda_t\}.$ Naturally, each sequence $S_t$ corresponds to only one $\Lambda_t$. We define $w_{\lambda_t}$ as the weight of the set of equivalence class parameters $\lambda_t$ at time $t$. This weight is the sum of the implicit weights of the sequences whose behavior matches $\lambda_t$, i.e.,
	\begin{align}
	w_{\lambda_t}=\sum_{F_\lambda(S_t)=\lambda_t}^{}w_{S_t},\label{wlt}
	\end{align}
	where $F_\lambda(\cdot)$ maps sequences $S_t$ to class parameters $\lambda_t$, which is used to transform the definition of $w_{t,m}$ in equation \eqref{wmt} as
	\begin{align}
	w_{t,m}=\sum_{\lambda_t(1)=m}^{} w_{\lambda_t}.\label{wmt2}
	\end{align}

	Our weight update strategy for the parameter sequence $w_{\lambda_t}$ follows a two-step approach. Initially, we introduce an intermediate variable $z_{\lambda_t}$ that incorporates an exponential performance update, similar to the exponential weighting algorithm \cite{cesabook}. The variable $z_{\lambda_t}$ is defined as
	\begin{align}
		z_{\lambda_t}\triangleq w_{\lambda_t}e^{-\eta_{t-1}\phi_{t,\lambda_t(1)}},\label{zlt}
	\end{align}
	where $\phi_{t,m}$ measures the selection performance, which will be explained in the subsequent section.
	Secondly, we construct a probability sharing network between the equivalence classes, which implicitly represents and assigns a weight to each individual sequence $S_t$ at time $t$. We achieve this by computing the sum of products between the transition weights $\Tau(\lambda_{t+1}|\lambda_t)$ and the powered $z_{\lambda_t}$ values, where the power normalization on $z_{\lambda_t}$ is necessary for adaptive learning rates. The equation that governs this probability sharing network is 
	\begin{align}
		w_{\lambda_{t+1}}=\sum_{\lambda_t\in\Omega_t}\Tau(\lambda_{t+1}|\lambda_t)z_{\lambda_t}^{\frac{\eta_t}{\eta_{t-1}}},\label{wlt+}
	\end{align}
	where $\Tau(\lambda_{t+1}|\lambda_t)$ is the transition weight from class parameters $\lambda_t$ to $\lambda_{t+1}$ such that $\sum_{\lambda_{t+1}\in\Omega_{t+1}}\Tau(\lambda_{t+1}|\lambda_t)=1$. It is worth noting that this weight update approach assigns a weight to every individual sequence $S_t$ implicitly. We provide a summary of our proposed method in Algorithm \ref{alg:framework}.
		\subsection{Example Implementations}\label{sec:example}
	All the algorithmic parameters $\eta_t$, $\epsilon_t$, $\phi_{t,m}$ will be gradually designed in the next section to provide the aimed regret bounds. We note that given the problem setting, the class parameters $\lambda_t$ can include auxiliary information that either defines an arm selection rule for hyper-experts, or groups together class transition updates. While our algorithm is generic and applicable for various comparator classes, we provide some example designs for $\lambda_t$ and $\Tau(\lambda_{t+1}|\lambda_{t})$ to increase comprehension.
	
	%\subsection{Example designs}
	\begin{itemize}
		\item For fixed bandits \cite{auerExp}, we have $\lambda_t=\{m\}$, $\Tau(\lambda_1)=1/M$, $$\Tau(\lambda_{t+1}|\lambda_t)=\begin{cases}
			1, &\lambda_{t+1}(1)=\lambda_t(1)\\
			0, &\lambda_{t+1}(1)\neq\lambda_t(1)
		\end{cases}.
		$$
		The computational complexity of this design is $O(M)$.
		
		\item For switching bandits \cite{gBandit}, where $\lambda_t=\{m,\tau\}$, we can utilize the following transitions:
		
		\begin{align*}
			\Tau(\lambda_{t+1}|\lambda_t)
			=\begin{cases}
				\displaystyle1-\frac{1}{\lambda_{t}(2)+1},& 
				{}^{\lambda_{t+1}(1)=\lambda_{t}(1)}_{\lambda_{t+1}(2)=\lambda_{t}(2)+1}
				\\
				\displaystyle\frac{1}{M-1}{\frac{1}{\lambda_{t}(2)+1}},& 
				{}^{\lambda_{t+1}(1)\neq\lambda_{t}(1)}_{\lambda_{t+1}(2)=1}\\
				0,& otherwise
			\end{cases}.
		\end{align*}
		The computational complexity of this design is $O(MT)$, we can be decreased to $O(M)$ by making the transition weights only dependent on the current round $t$.
		
		\item For contextual bandits \cite{neyshabouri2018asymptotically}, instead of keeping track of the arm played $m$, we need to keep a mapping $\mathcal{M}\colon\{1,2,\ldots,N\}\rightarrow\{1,2,\ldots,M\}$ between the context and the arms. Here, $c_t\in\{1,2,\ldots,N\}$ is the ordered context received at time $t$. Here, we set the class parameters as $\lambda_t=\{\mathcal{M}, c\}$, where $c$ is the number of context regions (each region refers to consecutive context with same mapping)
		\begin{align*}
			\Tau(\lambda_{t+1}|\lambda_t)=\begin{cases}
				\displaystyle1-\frac{1}{t}, &\scriptstyle\lambda_{t+1}(1)=\lambda_t(1)\\
				\displaystyle\frac{1}{t}\left(\frac{1}{2NM}\right)^{\lambda_{t+1}(2)}, &\scriptstyle\lambda_{t+1}(1)\neq\lambda_t(1)
			\end{cases}.
		\end{align*}
		With the efficient implementation in \cite{willems1995context}, complexity $O(M\log N)$ can be achieved.
		
		\item For periodic bandits \cite{oh2019periodic}, we need to keep a mapping $\mathcal{G}_\tau\colon \{1,2,\ldots,\tau\}\rightarrow\{1,2,\ldots,M\}$, for a relevant period length $\tau\in$. Here, we set the class parameters as $\lambda_t=\{\mathcal{G}_\tau, \tau\}$, and
		\begin{align*}
			\Tau(\lambda_{t+1}|\lambda_t)=\begin{cases}
				1-\frac{1}{t}, &\scriptstyle\lambda_t(1)=\lambda_{t+1}(1)\\
				\frac{1}{2t}(2M)^{\lambda_{t}(2)-\lambda_{t+1}(2)}, &\scriptstyle\lambda_t(1)\subset\lambda_{t+1}(1)\\
				\frac{1}{2t}(2M)^{-\lambda_{t+1}(2)}, &\scriptstyle\lambda_t(1)\not\subseteq\lambda_{t+1}(1)
			\end{cases}.
		\end{align*}
		When we compete against sequences with at most $\tau_B$ period, $O(M\tau_B)$ complexity can be achieved with an efficient implementation.
		
	\end{itemize}

\section{Expected Cumulative Regret Analysis}\label{sec:regret}
	In this section, we study the performance of our algorithm. We first provide a summary of some important notations and definitions, which will be heavily used. Then, we will incrementally study the relevant regret bounds.
	\subsection{Notations and Definitions}\label{secsec:not}
	\begin{enumerate}
		\item $q_{t,m}$ is the probability of selecting $m$ at $t$ as in \eqref{qtm}.
		\item $\E_{f_{t,m}}[x_{t,m}]$ is the convex sum of $x_{t,m}$ with the coefficients $f_{t,m}$, i.e., $\sum_{m=1}^{M}f_{t,m}x_{t,m}$.
		\item $\E_{f_{t}}[x]$ is the expectation of $x$ when $i_t$ is drawn from $f_t$.
		\item $\E[x]$ is the expectation of $x$ when $i_t$ is drawn from $q_{t,m}$.
		\item $\eta_t$ is the learning rate used in \eqref{zlt}.
		\item $\phi_{t,m}$ is the performance metric used in \eqref{zlt}.
		\item $d_t\triangleq \enspace\max_m\phi_{t,m}-\min_m\phi_{t,m}$.
		\item $v_t\triangleq \enspace\E_{p_{t,m}}\phi_{t,m}^2$.
		\item $D_t\triangleq\max_{1\leq t' \leq t}d_t,$.
		\item $V_t\triangleq\sum_{t'=1}^t v_t$. 
		\item $\lambda_t$ is an equivalence class parameter at time $t$ as in \eqref{lamt}.
		\item $\Omega_t$ is the set of all $\lambda_t$ at time $t$.
		\item $\Lambda_T\triangleq\{\lambda_t\}_{t=1}^T$.
		\item $z_{\lambda_{t}}$ is as in \eqref{zlt}.
		\item $\Tau(\cdot|\cdot)$ is the transition weight used in \eqref{wlt+}.
		\item $\Tau(\{\lambda_t\}_{t=1}^T)\triangleq\prod_{t=1}^T\Tau(\lambda_t|\lambda_{t-1})$.		
		\item $W(\Lambda_T)\triangleq \log(\max_{1\leq t\leq T}|\Omega_{t-1}|)-\log(\Tau(\Lambda_T))$, which corresponds to the complexity of a competition.
		\item $e$ is the Euler coefficient.
		\item $\log(\cdot)$ is the natural logarithm.
\end{enumerate}

	\subsection{Useful Lemmas}
	We start our regret analyses with some useful lemmas. We begin by making an assumption on the design choices of the learning rate $\eta_t$ and the performance metric $\phi_{t,m}$.
\begin{assumption}\label{ass:-eta_tphi_tm}
	Let $-\eta_t\phi_{t,m}\leq 1, \forall t,m$.
\end{assumption}

To derive the regret bounds of our framework, we first determine a term of interest 
\begin{align}
	\frac{1}{\eta_t}\log\E_{p_{t,m}}[e^{-\eta_t\phi_{t,m}}],\label{Eexp}
\end{align}
and use it to derive some useful Lemmas.
\begin{lemma}\label{lem:lB}
	For any probability simplex $p_{t,m}$, we have the following inequality
	\begin{align}
		\frac{1}{\eta_t}\log\E_{p_{t,m}}[e^{-\eta_t\phi_{t,m}}]\leq-\E_{p_{t,m}}\phi_{t,m}+\eta_t\E_{p_{t,m}}\phi_{t,m}^2,\nonumber
	\end{align}
	when we have \autoref{ass:-eta_tphi_tm}.
	\begin{proof}
		The proof uses the inequality $e^x\leq1+x+x^2$, when $x\leq1$ (which is comes from Taylor series \cite{handbook}).
	\end{proof}		
\end{lemma}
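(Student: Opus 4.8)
The plan is to establish the claimed inequality by a direct chain of three elementary bounds applied pointwise and then under the convex-combination $\E_{p_{t,m}}$. First I would invoke the hint from the statement: under \autoref{ass:-eta_tphi_tm} we have $-\eta_t\phi_{t,m}\leq 1$ for every $m$, so the scalar $x=-\eta_t\phi_{t,m}$ satisfies $x\leq 1$ and we may apply $e^{x}\leq 1+x+x^2$. Substituting gives the pointwise bound
\begin{align}
	e^{-\eta_t\phi_{t,m}}\leq 1-\eta_t\phi_{t,m}+\eta_t^2\phi_{t,m}^2.\nonumber
\end{align}

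Next I would take the convex combination with coefficients $p_{t,m}$ on both sides. Since $\E_{p_{t,m}}$ is linear and preserves the inequality (all coefficients are nonnegative and sum to one), this yields
\begin{align}
	\E_{p_{t,m}}[e^{-\eta_t\phi_{t,m}}]\leq 1-\eta_t\E_{p_{t,m}}\phi_{t,m}+\eta_t^2\E_{p_{t,m}}\phi_{t,m}^2.\nonumber
\end{align}
The final step is to take $\frac{1}{\eta_t}\log(\cdot)$ of both sides. Here I would use the standard logarithmic inequality $\log(1+y)\leq y$, valid for all $y>-1$, applied with $y=-\eta_t\E_{p_{t,m}}\phi_{t,m}+\eta_t^2\E_{p_{t,m}}\phi_{t,m}^2$. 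Because $\log$ is monotone increasing, the bound on $\E_{p_{t,m}}[e^{-\eta_t\phi_{t,m}}]$ carries over, and then $\log(1+y)\leq y$ collapses the right-hand side to exactly $-\eta_t\E_{p_{t,m}}\phi_{t,m}+\eta_t^2\E_{p_{t,m}}\phi_{t,m}^2$. Dividing through by $\eta_t$ produces the claimed right-hand side $-\E_{p_{t,m}}\phi_{t,m}+\eta_t\E_{p_{t,m}}\phi_{t,m}^2$.

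The only subtlety, and the step I would watch most carefully, is the sign of $\eta_t$ and the validity conditions of the two auxiliary inequalities. Dividing by $\eta_t$ reverses the inequality if $\eta_t<0$, so I would confirm that the intended regime is $\eta_t>0$ (the learning rate), in which case the direction is preserved; the assumption $-\eta_t\phi_{t,m}\leq 1$ is precisely what licenses $e^x\leq 1+x+x^2$ at the first step. I would also verify that the argument of the logarithm stays positive so that $\log(1+y)\leq y$ applies — this is automatic since $\E_{p_{t,m}}[e^{-\eta_t\phi_{t,m}}]$ is an expectation of strictly positive quantities. Beyond these checks the argument is a routine three-line estimate, so I expect no genuine obstacle, only the need to state the monotonicity and sign conditions cleanly.
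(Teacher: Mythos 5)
Your proof is correct and follows exactly the route the paper intends: the pointwise bound $e^{x}\leq 1+x+x^{2}$ for $x=-\eta_t\phi_{t,m}\leq 1$ (licensed by \autoref{ass:-eta_tphi_tm}), followed by taking the convex combination, applying $\log(1+y)\leq y$, and dividing by the positive learning rate $\eta_t$. The paper's proof is only the one-line citation of the Taylor inequality, so your write-up simply supplies the routine details, including the correct sign and positivity checks.
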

The preceding result puts an upper bound to our term of interest in \eqref{Eexp}. Similarly, we also have the following result, which is a lower bound to that same term.

\begin{lemma}\label{lem:uB1}
	For any probability simplex $p_{t,m}$, we have the following inequality
	\begin{align}
		\frac{1}{\eta_{t}}\log\E_{p_{t,m}}[e^{-\eta_{t}\phi_{t,m}}]\geq
		\frac{1}{\eta_{t-1}}&\log\E_{p_{t,m}}[e^{-\eta_{t-1}\phi_{t,m}}]\nonumber\\
		&-\left|1-\frac{\eta_t}{\eta_{t-1}}\right|d_t,\nonumber
	\end{align}
	where the operation $|\cdot|$ gives the absolute value. 
	\begin{proof}
		%The proof is in Appendix \ref{pro:lem:uB1}.
		First of all, we have
		\begin{align}
			\frac{1}{\eta_{t-1}}\log&\E_{p_{t,m}}[e^{-\eta_{t-1}\phi_{t,m}}]\nonumber\\	&=\frac{1}{\eta_{t-1}}\log\E_{p_{t,m}}[e^{-\eta_{t}\phi_{t,m}+(\eta_t-\eta_{t-1})\phi_{t,m}}]\nonumber\\
			&\leq\frac{1}{\eta_{t-1}}\log\E_{p_{t,m}}[e^{-\eta_{t}\phi_{t,m}+(\eta_t-\eta_{t-1})a_t}]\nonumber\\
			&\leq\frac{1}{\eta_{t-1}}\log\E_{p_{t,m}}[e^{-\eta_{t}\phi_{t,m}}]+\left(\frac{\eta_t}{\eta_{t-1}}-1\right)a_t,\label{eq:2}
		\end{align}
		where $a_t$ is either minimum or maximum of $\phi_{t,m}$ over $m$ depending on whether or not $\eta_{t-1}$ is greater than $\eta_{t}$, i.e.,
		\begin{align}
			a_t=\left\{
			\begin{array}{ll}
				\min_m\phi_{t,m} & \eta_t\leq\eta_{t-1}\\
				\max_m\phi_{t,m} & \eta_t\geq\eta_{t-1}
			\end{array}.
			\right.
		\end{align}
		Secondly, we also have
		\begin{align}
			\frac{1}{\eta_{t-1}}\log\E_{p_{t,m}}[e^{-\eta_{t}\phi_{t,m}}]-&\frac{1}{\eta_{t}}\log\E_{p_{t,m}}[e^{-\eta_{t}\phi_{t,m}}]\nonumber\\
			&\leq\left(\frac{1}{\eta_{t-1}}-\frac{1}{\eta_{t}}\right)\log\E_{p_{t,m}}[e^{-\eta_{t}b_t}]\nonumber\\
			&\leq-\left(\frac{\eta_t}{\eta_{t-1}}-1\right)b_t,\label{eq:3}
		\end{align}
		where $b_t$ is either minimum or maximum of $\phi_{t,m}$ over $m$ depending on whether or not $\eta_{t}$ is greater than $\eta_{t-1}$, i.e.,
		\begin{align}
			b_t=\left\{
			\begin{array}{ll}
				\max_m\phi_{t,m} & \eta_t\leq\eta_{t-1}\\
				\min_m\phi_{t,m} & \eta_t\geq\eta_{t-1}
			\end{array}.
			\right.
		\end{align}
		Combining \eqref{eq:2} and \eqref{eq:3}, we get
		\begin{align}
			\frac{1}{\eta_{t-1}}\log\E_{p_{t,m}}[e^{-\eta_{t-1}\phi_{t,m}}]&-\frac{1}{\eta_{t}}\log\E_{p_{t,m}}[e^{-\eta_{t}\phi_{t,m}}]\nonumber\\
			&\leq\left(\frac{\eta_t}{\eta_{t-1}}-1\right)(a_t-b_t)\nonumber\\
			&\leq\left|\frac{\eta_t}{\eta_{t-1}}-1\right|d_t,\label{eq:4}
		\end{align}
		where $d_t\triangleq(\max_m\phi_{t,m}-\min_m\phi_{t,m})$. Moreover, since
		\begin{align}
			-\frac{1}{\eta_t}\log\E_{p_{t,m}}[e^{-\eta_t\phi_{t,m}}]=&-\frac{1}{\eta_{t-1}}\log\E_{p_{t,m}}[e^{-\eta_{t-1}\phi_{t,m}}]\nonumber\\
			&+\frac{1}{\eta_{t-1}}\log\E_{p_{t,m}}[e^{-\eta_{t-1}\phi_{t,m}}]\nonumber\\
			&-\frac{1}{\eta_t}\log\E_{p_{t,m}}[e^{-\eta_t\phi_{t,m}}]\label{eq:1},
		\end{align}
		putting \eqref{eq:4} into \eqref{eq:1} concludes the proof.
	\end{proof}
\end{lemma}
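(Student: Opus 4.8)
The plan is to recast the claimed lower bound as an upper bound on a single difference. Moving $\frac{1}{\eta_{t-1}}\log\E_{p_{t,m}}[e^{-\eta_{t-1}\phi_{t,m}}]$ to the left and $\frac{1}{\eta_t}\log\E_{p_{t,m}}[e^{-\eta_t\phi_{t,m}}]$ to the right, it suffices to establish
\begin{align}
	\frac{1}{\eta_{t-1}}\log\E_{p_{t,m}}[e^{-\eta_{t-1}\phi_{t,m}}]-\frac{1}{\eta_{t}}\log\E_{p_{t,m}}[e^{-\eta_{t}\phi_{t,m}}]\leq\left|1-\frac{\eta_t}{\eta_{t-1}}\right|d_t,\nonumber
\end{align}
since the final reduction is merely the algebraic identity obtained by adding and subtracting $\frac{1}{\eta_{t-1}}\log\E_{p_{t,m}}[e^{-\eta_{t-1}\phi_{t,m}}]$. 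The key observation is that this difference blends two distinct perturbations: the learning rate \emph{inside} the exponent shifts from $\eta_{t-1}$ to $\eta_t$, while the external normalizing factor shifts from $1/\eta_{t-1}$ to $1/\eta_t$. I would disentangle these effects by inserting the hybrid quantity $\frac{1}{\eta_{t-1}}\log\E_{p_{t,m}}[e^{-\eta_t\phi_{t,m}}]$ and bounding each of the two resulting gaps on its own.

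For the exponent perturbation, I would rewrite $-\eta_{t-1}\phi_{t,m}=-\eta_t\phi_{t,m}+(\eta_t-\eta_{t-1})\phi_{t,m}$ inside the expectation and replace the random factor $(\eta_t-\eta_{t-1})\phi_{t,m}$ by a deterministic surrogate $(\eta_t-\eta_{t-1})a_t$, where $a_t$ is taken to be $\max_m\phi_{t,m}$ or $\min_m\phi_{t,m}$ according to the sign of $\eta_t-\eta_{t-1}$, chosen so that the substitution overestimates for every $m$. The now-constant factor $e^{(\eta_t-\eta_{t-1})a_t}$ then leaves both the expectation and the logarithm, producing $\frac{1}{\eta_{t-1}}\log\E_{p_{t,m}}[e^{-\eta_t\phi_{t,m}}]+\left(\frac{\eta_t}{\eta_{t-1}}-1\right)a_t$.

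For the prefactor perturbation, I would bound $\left(\frac{1}{\eta_{t-1}}-\frac{1}{\eta_t}\right)\log\E_{p_{t,m}}[e^{-\eta_t\phi_{t,m}}]$ by replacing the log-moment-generating term with $-\eta_t b_t$, again selecting $b_t\in\{\max_m\phi_{t,m},\min_m\phi_{t,m}\}$ according to the sign of $\frac{1}{\eta_{t-1}}-\frac{1}{\eta_t}$ so that the replacement goes in the correct direction; this yields the contribution $-\left(\frac{\eta_t}{\eta_{t-1}}-1\right)b_t$.

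Summing the two contributions collapses them to $\left(\frac{\eta_t}{\eta_{t-1}}-1\right)(a_t-b_t)$, and the step I expect to be the only delicate point is verifying that the case-dependent choices of $a_t$ and $b_t$ always force this product to equal $\left|1-\frac{\eta_t}{\eta_{t-1}}\right|d_t$: when $\eta_t\geq\eta_{t-1}$ one has $a_t-b_t=\max_m\phi_{t,m}-\min_m\phi_{t,m}=d_t$ times a nonnegative factor, whereas when $\eta_t\leq\eta_{t-1}$ both $a_t-b_t=-d_t$ and $\frac{\eta_t}{\eta_{t-1}}-1$ flip sign, so the product is again $+\left|1-\frac{\eta_t}{\eta_{t-1}}\right|d_t$. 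The main obstacle is thus purely bookkeeping: aligning the $\min/\max$ selections of $a_t$ and $b_t$ with the signs of the two perturbations so that the cross terms fuse into a single absolute-value expression rather than leaving a residual dependence on the individual extrema of $\phi_{t,m}$.
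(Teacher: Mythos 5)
Your proposal is correct and follows essentially the same route as the paper's own proof: you insert the identical hybrid term $\tfrac{1}{\eta_{t-1}}\log\E_{p_{t,m}}[e^{-\eta_t\phi_{t,m}}]$, bound the exponent shift via the extremum $a_t$ and the prefactor shift via $b_t$, and fuse the two into $\left|1-\tfrac{\eta_t}{\eta_{t-1}}\right|d_t$ by the same sign-matching case analysis. No gaps; this matches the paper's argument step for step.
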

Note that \autoref{lem:uB1} provides only a partial bound. To further bound the term, we have the following assumption.

\begin{assumption}\label{ass:etatnoninc}
	Let $n_t$ be nonincreasing with $t$, i.e., $n_{t+1}\leq n_{t}, \forall t$.
\end{assumption}
We continue from \autoref{lem:uB1} to bound the excess term on the right hand side. 
\begin{lemma}\label{lem:uB22}
	When using \autoref{alg:framework}, we have
	\begin{align*}
		\log\E_{p_{t,m}}[e^{-\eta_{t-1}\phi_{t,m}}]&=\log\left(\frac{\sum_{\lambda_t\in\Omega_t}z_{\lambda_t}}{\sum_{\lambda_{t-1}\in\Omega_{t-1}}z_{\lambda_{t-1}}^{\frac{\eta_{t-1}}{\eta_{t-2}}}}\right)
	\end{align*}
	when we have \autoref{ass:etatnoninc}.
\end{lemma}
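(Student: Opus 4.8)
The plan is to establish this as an exact bookkeeping identity obtained by substituting the algorithm's update rules into the left-hand side; no inequality or approximation is involved. First I would expand the simplex expectation using the definition of $p_{t,m}$ in \eqref{pmt}, writing
\begin{align}
\E_{p_{t,m}}[e^{-\eta_{t-1}\phi_{t,m}}]=\frac{\sum_{m=1}^{M}w_{t,m}e^{-\eta_{t-1}\phi_{t,m}}}{\sum_{m'=1}^{M}w_{t,m'}},\nonumber
\end{align}
so the task reduces to identifying the numerator with $\sum_{\lambda_t\in\Omega_t}z_{\lambda_t}$ and the denominator with $\sum_{\lambda_{t-1}\in\Omega_{t-1}}z_{\lambda_{t-1}}^{\eta_{t-1}/\eta_{t-2}}$.

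For the numerator I would substitute $w_{t,m}=\sum_{\lambda_t(1)=m}w_{\lambda_t}$ from \eqref{wmt2}. Because $\phi_{t,m}=\phi_{t,\lambda_t(1)}$ whenever $\lambda_t(1)=m$, the double sum over $m$ and over the classes consistent with $\lambda_t(1)=m$ collapses into a single sum over all $\lambda_t\in\Omega_t$, and each summand becomes precisely $w_{\lambda_t}e^{-\eta_{t-1}\phi_{t,\lambda_t(1)}}=z_{\lambda_t}$ by the definition \eqref{zlt}. This gives the claimed numerator. The key index check here is that the exponent on the left uses $\eta_{t-1}$, which is exactly the learning rate appearing in $z_{\lambda_t}$ (not the $\eta_t$ used to form $w_{\lambda_{t+1}}$).

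For the denominator I would use $\sum_{m'}w_{t,m'}=\sum_{\lambda_t\in\Omega_t}w_{\lambda_t}$ and then invoke the probability-sharing update \eqref{wlt+} shifted back one step, namely $w_{\lambda_t}=\sum_{\lambda_{t-1}\in\Omega_{t-1}}\Tau(\lambda_t|\lambda_{t-1})z_{\lambda_{t-1}}^{\eta_{t-1}/\eta_{t-2}}$. Swapping the order of summation and factoring out $z_{\lambda_{t-1}}^{\eta_{t-1}/\eta_{t-2}}$ leaves the inner sum $\sum_{\lambda_t\in\Omega_t}\Tau(\lambda_t|\lambda_{t-1})$, which equals $1$ by the normalization of the transition weights stated immediately after \eqref{wlt+}. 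This collapses the denominator to $\sum_{\lambda_{t-1}\in\Omega_{t-1}}z_{\lambda_{t-1}}^{\eta_{t-1}/\eta_{t-2}}$, and taking the logarithm of the ratio completes the argument.

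I do not anticipate a genuine obstacle, since every step is an exact rewriting; the only real care needed is in the index bookkeeping — correctly shifting \eqref{wlt+} by one time step and matching the learning-rate indices. I would also remark that \autoref{ass:etatnoninc} does not appear essential to the equality itself, which relies only on the update definitions and the normalization of the transition weights. Its role is rather to keep the exponents $\eta_{t-1}/\eta_{t-2}\le 1$ well-behaved and to maintain consistency with the standing hypotheses used when this identity is later combined with the excess-term estimate of \autoref{lem:uB1}.
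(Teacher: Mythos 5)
Your proposal is correct and follows essentially the same route as the paper: expand the expectation via \eqref{pmt}, convert the arm-indexed sums to class-indexed sums via \eqref{wmt2}, identify the numerator with $\sum_{\lambda_t}z_{\lambda_t}$ via \eqref{zlt}, and collapse the denominator to $\sum_{\lambda_{t-1}}z_{\lambda_{t-1}}^{\eta_{t-1}/\eta_{t-2}}$ by summing \eqref{wlt+} over $\lambda_t$ and using the normalization of the transition weights. Your side remark that \autoref{ass:etatnoninc} is not actually needed for this exact identity is also accurate.
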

\begin{proof}
	%The proof is in Appendix \ref{pro:lem:uB2}.
	From \eqref{pmt}, \eqref{wlt}, \eqref{zlt} and \eqref{wlt+}; we have
	\begin{align}
		-\frac{1}{\eta_{t-1}}\log&\left(\E_{p_{t,m}}[e^{-\eta_{t-1}\phi_{t,m}}]\right)\nonumber\\
		&=-\frac{1}{\eta_{t-1}}\log\left(\frac{\sum_{m}w_{t,m}e^{-\eta_{t-1}\phi_{t,m}}}{\sum_{m'}w_{t,m'}}\right)\label{eq:pr1}\\
		&=-\frac{1}{\eta_{t-1}}\log\left(\frac{\sum_{\lambda_t\in\Omega_t}w_{\lambda_t}e^{-\eta_{t-1}\phi_{t,\lambda_t(1)}}}{\sum_{\lambda_t\in\Omega_t}w_{\lambda_t}}\right)\label{eq:pr2}\\
		&=-\frac{1}{\eta_{t-1}}\log\left(\frac{\sum_{\lambda_t\in\Omega_t}z_{\lambda_t}}{\sum_{\lambda_t\in\Omega_t}w_{\lambda_t}}\right),\label{eq:pr3}\\
		&=-\frac{1}{\eta_{t-1}}\log\left(\frac{\sum_{\lambda_t\in\Omega_t}z_{\lambda_t}}{\sum_{\lambda_{t-1}\in\Omega_{t-1}}z_{\lambda_{t-1}}^{\frac{\eta_{t-1}}{\eta_{t-2}}}}\right),\label{eq:5}
	\end{align}
	where \eqref{eq:pr1}, \eqref{eq:pr2}, \eqref{eq:pr3} and \eqref{eq:5} use results from \eqref{pmt}, \eqref{wlt}, \eqref{zlt} and \eqref{wlt+} respectively.
\end{proof}
	
	\begin{lemma}\label{lem:uB2}
		When using \autoref{alg:framework}, we have
		\begin{align}
			\frac{1}{\eta_{t-1}}\log\E_{p_{t,m}}[e^{-\eta_{t-1}\phi_{t,m}}]&\geq\frac{1}{\eta_{t-1}}\log\left({\sum_{\lambda_t\in\Omega_t}z_{\lambda_t}}\right)\nonumber\\
			&-\frac{1}{\eta_{t-2}}\log\left({\sum_{\lambda_{t-1}\in\Omega_{t-1}}z_{\lambda_{t-1}}}\right)\nonumber\\
			&-\left(\frac{1}{\eta_{t-1}}-\frac{1}{\eta_{t-2}}\right)\log(|\Omega_{t-1}|),\nonumber
		\end{align}
		when we have \autoref{ass:etatnoninc}.
	\begin{proof}
		We start by bounding the denominator in the logarithm of \autoref{lem:uB22} by 
		\begin{align}
			&\frac{1}{\eta_{t-1}}\log\left({\sum_{\lambda_{t-1}\in\Omega_{t-1}}z_{\lambda_{t-1}}^{\frac{\eta_{t-1}}{\eta_{t-2}}}}\right)\nonumber\\
			&\leq\frac{1}{\eta_{t-1}}\log\left({\sum_{\lambda_{t-1}}\frac{1}{|\Omega_{t-1}|}z_{\lambda_{t-1}}^{\frac{\eta_{t-1}}{\eta_{t-2}}}}\right)+\frac{\log(|\Omega_{t-1}|)}{\eta_{t-1}},\nonumber\\
			&\leq\frac{1}{\eta_{t-2}}\frac{\eta_{t-2}}{\eta_{t-1}}\log\left({\sum_{\lambda_{t-1}}\frac{1}{|\Omega_{t-1}|}z_{\lambda_{t-1}}^{\frac{\eta_{t-1}}{\eta_{t-2}}}}\right)+\frac{\log(|\Omega_{t-1}|)}{\eta_{t-1}},\nonumber
		\end{align}
		where the set $\Omega_{t-1}$ is omitted over the summations after the first line for space considerations. 
		
		From Jensen's Inequality, we get
		\begin{align}
			&\frac{1}{\eta_{t-1}}\log\left({\sum_{\lambda_{t-1}\in\Omega_{t-1}}z_{\lambda_{t-1}}^{\frac{\eta_{t-1}}{\eta_{t-2}}}}\right)\nonumber\\
			&\leq\frac{1}{\eta_{t-2}}\log\left({\sum_{\lambda_{t-1}}\frac{1}{|\Omega_{t-1}|}z_{\lambda_{t-1}}}\right)+\frac{\log(|\Omega_{t-1}|)}{\eta_{t-1}},\label{eq:pr6.1}\\
			&\leq\frac{1}{\eta_{t-2}}\log\left({\sum_{\lambda_{t-1}}z_{\lambda_{t-1}}}\right)+\left(\frac{1}{\eta_{t-1}}-\frac{1}{\eta_{t-2}}\right)\log(|\Omega_{t-1}|),\label{eq:6}
		\end{align}
		Putting \eqref{eq:6} into \autoref{lem:uB22}, we get
		\begin{align}
			\frac{1}{\eta_{t-1}}\log\E_{p_{t,m}}[e^{-\eta_{t-1}\phi_{t,m}}]&\geq\frac{1}{\eta_{t-1}}\log\left({\sum_{\lambda_t\in\Omega_t}z_{\lambda_t}}\right)\nonumber\\
			&-\frac{1}{\eta_{t-2}}\log\left({\sum_{\lambda_{t-1}\in\Omega_{t-1}}z_{\lambda_{t-1}}}\right)\nonumber\\
			&-\left(\frac{1}{\eta_{t-1}}-\frac{1}{\eta_{t-2}}\right)\log(|\Omega_{t-1}|),\label{eq:7}
		\end{align}
		which concludes the proof.
	\end{proof}
	\end{lemma}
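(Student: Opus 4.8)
The plan is to start from the exact identity in \autoref{lem:uB22}, which already expresses $\log\E_{p_{t,m}}[e^{-\eta_{t-1}\phi_{t,m}}]$ as the logarithm of a ratio whose numerator is $\sum_{\lambda_t\in\Omega_t}z_{\lambda_t}$ and whose denominator is $\sum_{\lambda_{t-1}\in\Omega_{t-1}}z_{\lambda_{t-1}}^{\eta_{t-1}/\eta_{t-2}}$. Dividing by $\eta_{t-1}$ splits this into a numerator term $\frac{1}{\eta_{t-1}}\log(\sum_{\lambda_t\in\Omega_t}z_{\lambda_t})$ that already matches the first term of the target inequality, so the entire task reduces to producing an upper bound on the single denominator term $\frac{1}{\eta_{t-1}}\log(\sum_{\lambda_{t-1}}z_{\lambda_{t-1}}^{\eta_{t-1}/\eta_{t-2}})$ and then carrying it across with its sign flipped.

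To bound that denominator term, I would first inject a uniform probability weight by writing $\sum_{\lambda_{t-1}}z_{\lambda_{t-1}}^{\eta_{t-1}/\eta_{t-2}} = |\Omega_{t-1}|\sum_{\lambda_{t-1}}\frac{1}{|\Omega_{t-1}|}z_{\lambda_{t-1}}^{\eta_{t-1}/\eta_{t-2}}$, which peels off an additive $\frac{1}{\eta_{t-1}}\log|\Omega_{t-1}|$ and leaves a genuine average of the powered weights under the uniform distribution on $\Omega_{t-1}$. The crucial step is then Jensen's inequality: under \autoref{ass:etatnoninc} the learning rates are nonincreasing, hence $\eta_{t-1}\leq\eta_{t-2}$ and the exponent $\eta_{t-1}/\eta_{t-2}$ lies in $(0,1]$, which makes $x\mapsto x^{\eta_{t-1}/\eta_{t-2}}$ concave. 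Applying Jensen to this concave map pushes the exponent outside the average, and after multiplying through by $\eta_{t-2}/\eta_{t-1}$ the leading $\frac{1}{\eta_{t-1}}$ prefactor is converted into a $\frac{1}{\eta_{t-2}}$ prefactor acting on $\log(\sum_{\lambda_{t-1}}\frac{1}{|\Omega_{t-1}|}z_{\lambda_{t-1}})$.

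Finally I would expand the uniform-weight factor back out, rewriting $\frac{1}{\eta_{t-2}}\log(\sum_{\lambda_{t-1}}\frac{1}{|\Omega_{t-1}|}z_{\lambda_{t-1}})$ as $\frac{1}{\eta_{t-2}}\log(\sum_{\lambda_{t-1}}z_{\lambda_{t-1}}) - \frac{1}{\eta_{t-2}}\log|\Omega_{t-1}|$. Collecting this $-\frac{1}{\eta_{t-2}}\log|\Omega_{t-1}|$ together with the earlier $+\frac{1}{\eta_{t-1}}\log|\Omega_{t-1}|$ yields exactly the coefficient $(\frac{1}{\eta_{t-1}}-\frac{1}{\eta_{t-2}})$, which is moreover nonnegative precisely because the rates are nonincreasing. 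Substituting this denominator bound, negated, back into the split form of \autoref{lem:uB22} then delivers the claimed lower bound directly.

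I expect the main obstacle to be the Jensen step, specifically pinning down the direction of the inequality. One must verify that $\eta_{t-1}/\eta_{t-2}\le 1$ makes the power function concave rather than convex, so that the average of powers sits below the power of the average; this is exactly where \autoref{ass:etatnoninc} is indispensable, and a reversed monotonicity reading would flip the entire bound. The bookkeeping of the $\frac{1}{\eta_{t-1}}$ versus $\frac{1}{\eta_{t-2}}$ prefactors, though routine, likewise needs care so that the two $\log|\Omega_{t-1}|$ contributions combine into the stated coefficient and nothing is left dangling.
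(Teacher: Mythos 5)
Your proposal is correct and follows essentially the same route as the paper's proof: start from the identity of \autoref{lem:uB22}, insert the uniform weight $\tfrac{1}{|\Omega_{t-1}|}$ to peel off $\tfrac{1}{\eta_{t-1}}\log|\Omega_{t-1}|$, apply Jensen's inequality to the concave map $x\mapsto x^{\eta_{t-1}/\eta_{t-2}}$ (valid precisely because \autoref{ass:etatnoninc} gives exponent at most one), and recombine the two $\log|\Omega_{t-1}|$ contributions into the coefficient $\bigl(\tfrac{1}{\eta_{t-1}}-\tfrac{1}{\eta_{t-2}}\bigr)$. You also correctly identify the Jensen direction as the one delicate point, which is exactly where the paper's argument hinges.
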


In \autoref{lem:uB2}, we have succeeded in bounding the individual terms (at time $t$). However, our goal is to bound their summation (from $t=1$ to $T$), which requires the following result.
\begin{lemma}\label{lem:uB3}
	When using \autoref{alg:framework}, we have
	\begin{align*}
		\frac{1}{\eta_{T-1}}\log(z_{\lambda_T})\geq&\sum_{t=1}^T\frac{1}{\eta_{t-1}}\log(\Tau(\lambda_t|\lambda_{t-1}))-\sum_{t=1}^T\phi_{t,\lambda_{t}(1)},
	\end{align*}
	for any sequence of classes $\{\lambda_{t}\}_{t=1}^T$ when $|\Omega_0|=1$.
	\begin{proof}
		%The proof is in Appendix \ref{pro:lem:uB3}.
		By definition in \eqref{zlt}, we have
		\begin{align}
			-\log(z_{\lambda_t})=\eta_{t-1}\phi_{t,\lambda_{t}(1)}-\log(w_{\lambda_{t}}),\label{eq:logzlt}
		\end{align}
		and from \eqref{wlt+}, we have	
		\begin{align}
			-\log(w_{t,\lambda_t})\leq-\log(\Tau(\lambda_{t}|\lambda_{t-1}))-\frac{\eta_{t-1}}{\eta_{t-2}}\log(z_{\lambda_{t-1}}).\label{eq:logwlt+}
		\end{align}
		Combining \eqref{eq:logzlt} and \eqref{eq:logwlt+}, we get
		\begin{align}
			-\frac{1}{\eta_{t-1}}\log(z_{\lambda_t})=&\enspace\phi_{t,\lambda_{t}(1)}-\frac{1}{\eta_{t-1}}\log w_{\lambda_t}\nonumber\\
			\leq&\enspace\phi_{t,\lambda_{t}(1)}-\frac{1}{\eta_{t-1}}\log(\Tau(\lambda_t|\lambda_{t-1}))\nonumber\\
			&-\frac{1}{\eta_{t-2}}\log(z_{\lambda_{t-1}}).\label{eq:logzlt+}
		\end{align}
		From the telescoping relation in \eqref{eq:logzlt+}, we get
		\begin{align}
			-\frac{1}{\eta_{T-1}}\log(z_{\lambda_T})\leq&\sum_{t=1}^T\phi_{t,\lambda_{t}(1)}-\sum_{t=1}^T\frac{1}{\eta_{t-1}}\log(\Tau(\lambda_t|\lambda_{t-1})),
		\end{align}
		since $z_{0,m}=1$ and concludes the proof.
	\end{proof}
\end{lemma}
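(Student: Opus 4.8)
The plan is to establish a one-step recursive inequality for the quantity $-\tfrac{1}{\eta_{t-1}}\log(z_{\lambda_t})$ and then telescope it over $t=1,\ldots,T$. First I would take the logarithm of the definition \eqref{zlt}, which immediately gives $-\log(z_{\lambda_t})=\eta_{t-1}\phi_{t,\lambda_t(1)}-\log(w_{\lambda_t})$, isolating the performance increment $\phi_{t,\lambda_t(1)}$ and leaving the weight $w_{\lambda_t}$ as the only quantity still to be controlled.

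Next I would lower bound $w_{\lambda_t}$ using the probability-sharing update \eqref{wlt+}. Since every summand there is nonnegative and the sequence $\{\lambda_t\}_{t=1}^T$ is arbitrary but fixed, I would retain only the single term corresponding to the given predecessor $\lambda_{t-1}$ and discard the rest, obtaining $w_{\lambda_t}\geq \Tau(\lambda_t|\lambda_{t-1})\,z_{\lambda_{t-1}}^{\eta_{t-1}/\eta_{t-2}}$. Taking $-\log$ and dividing by $\eta_{t-1}$ then yields the recursion $-\tfrac{1}{\eta_{t-1}}\log(z_{\lambda_t})\leq \phi_{t,\lambda_t(1)}-\tfrac{1}{\eta_{t-1}}\log(\Tau(\lambda_t|\lambda_{t-1}))-\tfrac{1}{\eta_{t-2}}\log(z_{\lambda_{t-1}})$, where the exponent $\eta_{t-1}/\eta_{t-2}$ on $z_{\lambda_{t-1}}$ is precisely the factor that converts the $\tfrac{1}{\eta_{t-1}}\log$ on the left into the $\tfrac{1}{\eta_{t-2}}\log$ on the right, i.e. into the term that will cancel against the left-hand side of the same recursion one step earlier.

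Then I would telescope: summing the recursion over $t=1,\ldots,T$ collapses the chain of $z_{\lambda_{t-1}}$ terms and leaves $-\tfrac{1}{\eta_{T-1}}\log(z_{\lambda_T})\leq \sum_{t=1}^T\phi_{t,\lambda_t(1)}-\sum_{t=1}^T\tfrac{1}{\eta_{t-1}}\log(\Tau(\lambda_t|\lambda_{t-1}))$, provided the boundary term vanishes. This is where the hypothesis $|\Omega_0|=1$ enters: with a single initial class and the normalization $z_{\lambda_0}=1$, the leftover term $\tfrac{1}{\eta_{-1}}\log(z_{\lambda_0})=0$ drops out, and negating both sides gives the claimed inequality.

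The main subtlety is not any single estimate, since each step is either an identity or a one-line bound, but ensuring the adaptive learning rates thread correctly through the telescope. The reason the seemingly wasteful step of keeping only one summand in \eqref{wlt+} is legitimate is that the target inequality is asserted for an \emph{arbitrary} comparator path $\{\lambda_t\}_{t=1}^T$, so we are free to follow that specific trajectory through the transition network; the cost of discarding the other classes is exactly the $-\log(\Tau(\lambda_t|\lambda_{t-1}))$ transition penalty that surfaces in the bound. I would pay closest attention to the index bookkeeping at the boundary $t=1$, involving $\eta_{-1}$ and $\lambda_0$, as that is the easiest place to introduce an off-by-one error.
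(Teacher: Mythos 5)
Your proposal is correct and follows essentially the same route as the paper's proof: take the logarithm of \eqref{zlt}, lower bound $w_{\lambda_t}$ by retaining only the $\lambda_{t-1}$ summand of \eqref{wlt+} (which is what produces the $-\log(\Tau(\lambda_t|\lambda_{t-1}))$ penalty and the exponent $\eta_{t-1}/\eta_{t-2}$ that aligns the telescope), and then sum over $t$ with the boundary term killed by $|\Omega_0|=1$ and $z_{\lambda_0}=1$. Your added remarks on why discarding the other summands is legitimate and on the index bookkeeping at $t=1$ are accurate elaborations of steps the paper leaves implicit.
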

Now, we can combine \autoref{lem:uB1}, \autoref{lem:uB2} and \autoref{lem:uB3} in the following to provide a lower bound to the summation of interest.

\begin{lemma}\label{lem:uB4}
	When using \autoref{alg:framework}, we have
	\begin{align*}
		\sum_{t=1}^T\frac{1}{\eta_{t}}\log\left(\E_{p_{t,m}}[e^{-\eta_t\phi_{t,m}}]\right)\geq&-\sum_{t=1}^T\phi_{t,\lambda_t(1)}
		\\&-\sum_{t=1}^{T}\left(1-\frac{\eta_t}{\eta_{t-1}}\right)d_t\nonumber\\
		&+\sum_{t=1}^T\frac{1}{\eta_{t-1}}\log\left(\Tau(\lambda_{t}|\lambda_{t-1})\right)\nonumber\\
		&-\frac{1}{\eta_{T-1}}\log\left(\max_{1\leq t\leq T}|\Omega_{t-1}|\right),
	\end{align*}
	when we have \autoref{ass:etatnoninc}.
	\begin{proof}
		%The proof is in Appendix \ref{pro:lem:uB4}.
		We sum \eqref{eq:7} from $t=1$ to $T$, and get
		\begin{align}
			\sum_{t=1}^{T}\frac{1}{\eta_{t-1}}&\log\E_{p_{t,m}}[e^{-\eta_{t-1}\phi_{t,m}}]\nonumber\\
			\geq&\enspace\frac{1}{\eta_{T-1}}\log\left(\sum_{\lambda_{T}\in\Omega_{T}}z_{\lambda_{T}}\right)-\frac{1}{\eta_{-1}}\log\left(\sum_{\lambda_{0}\in\Omega_{0}}z_{\lambda_{0}}\right)\nonumber\\
			&-\sum_{t=1}^{T}\left(\frac{1}{\eta_{t-1}}-\frac{1}{\eta_{t-2}}\right)\log(|\Omega_{t-1}|),
		\end{align}
		where $\eta_{-1}$ and $\eta_{0}$ can be arbitrarily chosen as $\eta_{1}$ and $|\lambda_0|$ as $1$, $z_{\lambda_{0}}=1$.
		Then, using \autoref{lem:uB1}, \autoref{lem:uB2} and \autoref{lem:uB3}, we get
		\begin{align}
			\sum_{t=1}^T\frac{1}{\eta_{t}}\log\left(\E_{p_{t,m}}[e^{-\eta_t\phi_{\lambda_t}}]\right)\geq&\sum_{t=1}^{T}(\frac{1}{\eta_{t-2}}-\frac{1}{\eta_{t-1}})\log(|\Omega_{t-1}|)\nonumber
			\\&-\sum_{t=1}^T\phi_{t,\lambda_t(1)}\nonumber
			\\&-\sum_{t=1}^{T}\left(1-\frac{\eta_t}{\eta_{t-1}}\right)d_t\nonumber\\
			&+\sum_{t=1}^T\frac{1}{\eta_{t-1}}\log\left(\Tau(\lambda_{t}|\lambda_{t-1})\right).\label{eq:8}
		\end{align}
		Since $\eta_t\leq\eta_{t-1}$ and $|\Omega_t|\geq1$, \eqref{eq:8} becomes
		\begin{align}
			\sum_{t=1}^T\frac{1}{\eta_{t}}\log\left(\E_{p_{t,m}}[e^{-\eta_t\phi_{m}}]\right)\geq&-\sum_{t=1}^T\phi_{t,\lambda_t(1)}\nonumber
			\\&-\sum_{t=1}^{T}\left(1-\frac{\eta_t}{\eta_{t-1}}\right)d_t\nonumber\\
			&+\sum_{t=1}^T\frac{1}{\eta_{t-1}}\log\left(\Tau(\lambda_{t}|\lambda_{t-1})\right)\nonumber\\
			&-\frac{1}{\eta_{T-1}}\log(\max_{1\leq t\leq T}|\Omega_{t-1}|),
		\end{align}
		which concludes the proof.
	\end{proof}
\end{lemma}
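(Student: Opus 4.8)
The plan is to assemble the four preceding lemmas into a single telescoped lower bound, turning every quantity into one evaluated along a single fixed comparator sequence $\{\lambda_t\}_{t=1}^T$. First I would invoke \autoref{ass:etatnoninc}: since $\eta_t\leq\eta_{t-1}$, the factor $\left|1-\frac{\eta_t}{\eta_{t-1}}\right|$ in \autoref{lem:uB1} equals $1-\frac{\eta_t}{\eta_{t-1}}\geq 0$, so that lemma lower-bounds each $\frac{1}{\eta_t}\log\E_{p_{t,m}}[e^{-\eta_t\phi_{t,m}}]$ by its $\eta_{t-1}$-analogue minus the term $\left(1-\frac{\eta_t}{\eta_{t-1}}\right)d_t$. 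Summing over $t=1,\dots,T$ reduces the target to lower-bounding $\sum_t\frac{1}{\eta_{t-1}}\log\E_{p_{t,m}}[e^{-\eta_{t-1}\phi_{t,m}}]$, with the $d_t$ correction already in the form claimed.

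Next I would sum the per-round bound of \autoref{lem:uB2} over $t$. Its first two right-hand-side terms, $\frac{1}{\eta_{t-1}}\log(\sum_{\lambda_t}z_{\lambda_t})$ and $-\frac{1}{\eta_{t-2}}\log(\sum_{\lambda_{t-1}}z_{\lambda_{t-1}})$, telescope, leaving only the top boundary $\frac{1}{\eta_{T-1}}\log(\sum_{\lambda_T}z_{\lambda_T})$ and a bottom boundary that vanishes under the conventions $z_{\lambda_0}=1$, $|\Omega_0|=1$ (with $\eta_{-1}$ set to $\eta_1$). What survives besides the boundary is the residual correction $-\sum_t\left(\frac{1}{\eta_{t-1}}-\frac{1}{\eta_{t-2}}\right)\log|\Omega_{t-1}|$.

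To convert the surviving boundary into comparator-dependent quantities, I would use nonnegativity of the weights to drop the full sum to a single term, $\log(\sum_{\lambda_T}z_{\lambda_T})\geq\log(z_{\lambda_T})$, and then apply \autoref{lem:uB3}, which bounds $\frac{1}{\eta_{T-1}}\log(z_{\lambda_T})$ below by $\sum_t\frac{1}{\eta_{t-1}}\log(\Tau(\lambda_t|\lambda_{t-1}))-\sum_t\phi_{t,\lambda_t(1)}$ for any fixed $\{\lambda_t\}$. This is the conceptual heart of the argument: isolating one summand is precisely what makes the bound hold against an arbitrary chosen comparator, and it produces the $\sum_t\frac{1}{\eta_{t-1}}\log(\Tau(\lambda_t|\lambda_{t-1}))$ and $-\sum_t\phi_{t,\lambda_t(1)}$ terms of the statement.

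The final step, which I expect to be the main obstacle to state cleanly, is collapsing the residual $\sum_t\left(\frac{1}{\eta_{t-1}}-\frac{1}{\eta_{t-2}}\right)\log|\Omega_{t-1}|$ into the single term $\frac{1}{\eta_{T-1}}\log(\max_{1\leq t\leq T}|\Omega_{t-1}|)$. Monotonicity of $\eta_t$ makes each coefficient $\frac{1}{\eta_{t-1}}-\frac{1}{\eta_{t-2}}$ nonnegative, while $|\Omega_{t-1}|\geq 1$ makes each $\log|\Omega_{t-1}|$ nonnegative; hence I can upper-bound every $\log|\Omega_{t-1}|$ by $\log(\max_t|\Omega_{t-1}|)$, factor it out, and telescope the remaining coefficients to $\frac{1}{\eta_{T-1}}-\frac{1}{\eta_{-1}}\leq\frac{1}{\eta_{T-1}}$. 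The one genuine subtlety is tracking the sign of this correction, since it enters the lower bound negated and so the inequality direction flips relative to the bound being built. Combining the $d_t$ term, the transition and performance terms from \autoref{lem:uB3}, and this collapsed complexity term yields exactly the claimed inequality.
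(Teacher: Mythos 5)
Your proposal is correct and follows essentially the same route as the paper: sum the per-round bound of \autoref{lem:uB2} and telescope, apply \autoref{lem:uB1} to absorb the learning-rate change into the $d_t$ terms, drop $\log(\sum_{\lambda_T}z_{\lambda_T})$ to $\log(z_{\lambda_T})$ and invoke \autoref{lem:uB3}, then collapse the $\log|\Omega_{t-1}|$ residual via monotonicity of $\eta_t$. The only difference is that you make explicit the step of isolating a single summand before applying \autoref{lem:uB3}, which the paper leaves implicit.
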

With \autoref{lem:uB4}, we now have a lower bound to our summation of interest, which is the summation of our term of interest in \eqref{Eexp} from $t=1$ to $T$. 
	
\subsection{Expected Cumulative Regret Bound}
We combine \autoref{lem:lB} and \autoref{lem:uB4}, which are upper and lower bounds to our summation of interest, respectively, to get the following result.
\begin{lemma}\label{thm:bound1}
	When using \autoref{alg:framework}, we have
	\begin{align*}
		\sum_{t=1}^T\left(\E_{p_{t,m}}\phi_{t,m}-\phi_{t,\lambda_t(1)}\right)\leq& \sum_{t=1}^T\eta_t\E_{p_{t,m}}\phi_{t,m}^2\\
		&+\frac{\log(\max_{1\leq t\leq T}|\Omega_{t-1}|)}{\eta_{T-1}}\\
		&-\frac{1}{\eta_{T-1}}\log(\Tau(\Lambda_T))\\
		&+\sum_{t=1}^T\left(1-\frac{\eta_t}{\eta_{t-1}}\right)d_t,
	\end{align*}
	where $\Tau(\Lambda_T)=\Tau(\{\lambda_t\}_{t=1}^T)$; and we have \autoref{ass:-eta_tphi_tm} and \autoref{ass:etatnoninc}.
	\begin{proof}
		%The proof is in Appendix \ref{pro:thm:bound}. 
		We combine the results of \autoref{lem:lB} and \autoref{lem:uB4} to reach
		\begin{align}
			\sum_{t=1}^T\left(\E_{p_{t,m}}\phi_{t,m}-\phi_{t,m_t}\right)\leq& \sum_{t=1}^T\eta_t\E_{p_{t,m}}\phi_{t,m}^2\nonumber\\
			&+\frac{\log(\max_{1\leq t\leq T}|\Omega_{t-1}|)}{\eta_{T-1}}\nonumber\\
			&-\sum_{t=1}^T\frac{1}{\eta_{t-1}}\log(\Tau(\lambda_t|\lambda_{t-1}))\nonumber\\
			&+\sum_{t=1}^T\left(1-\frac{\eta_t}{\eta_{t-1}}\right)d_t.
		\end{align}
		Since $\eta_t$ is nonincreasing with time $t$ and $\Tau(\lambda_t|\lambda_{t-1})\leq 1$, we get
		\begin{align}
			\sum_{t=1}^T\left(\E_{p_{t,m}}\phi_{t,m}-\phi_{t,m_t}\right)\leq& \sum_{t=1}^T\eta_t\E_{p_{t,m}}\phi_{t,m}^2\nonumber\\
			&+\frac{\log(\max_{1\leq t\leq T}|\Omega_{t-1}|)}{\eta_{T-1}}\nonumber\\
			&-\frac{1}{\eta_{T-1}}\log(\Tau(\Lambda_T))\nonumber\\
			&+\sum_{t=1}^T\left(1-\frac{\eta_t}{\eta_{t-1}}\right)d_t,
		\end{align}
		where $\Tau(\Lambda_T)\triangleq\prod_{t=1}^T\Tau(\lambda_t|\lambda_{t-1})$, which concludes the proof.
	\end{proof}
\end{lemma}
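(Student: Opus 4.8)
The plan is to treat $\sum_{t=1}^T \frac{1}{\eta_t}\log\E_{p_{t,m}}[e^{-\eta_t\phi_{t,m}}]$, the summation of the term of interest in \eqref{Eexp}, as a single quantity and to sandwich it between a summed upper bound and the aggregate lower bound already established. For the upper side, I would simply add \autoref{lem:lB} over $t=1,\ldots,T$, which under \autoref{ass:-eta_tphi_tm} gives
\begin{align*}
\sum_{t=1}^T \frac{1}{\eta_t}\log\E_{p_{t,m}}[e^{-\eta_t\phi_{t,m}}]\leq \sum_{t=1}^T\left(-\E_{p_{t,m}}\phi_{t,m}+\eta_t\E_{p_{t,m}}\phi_{t,m}^2\right).
\end{align*}
For the lower side, \autoref{lem:uB4} already bounds the very same summation from below, and it holds under \autoref{ass:etatnoninc}. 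Chaining the two by transitivity and then rearranging---adding $\sum_t\E_{p_{t,m}}\phi_{t,m}$ to both sides so it pairs against $-\phi_{t,\lambda_t(1)}$, and moving the curvature, complexity and learning-rate-variation terms to the right---immediately yields
\begin{align*}
\sum_{t=1}^T\left(\E_{p_{t,m}}\phi_{t,m}-\phi_{t,\lambda_t(1)}\right)\leq& \sum_{t=1}^T\eta_t\E_{p_{t,m}}\phi_{t,m}^2 +\frac{\log(\max_{1\leq t\leq T}|\Omega_{t-1}|)}{\eta_{T-1}}\\
&-\sum_{t=1}^T\frac{1}{\eta_{t-1}}\log\Tau(\lambda_{t}|\lambda_{t-1}) +\sum_{t=1}^{T}\left(1-\frac{\eta_t}{\eta_{t-1}}\right)d_t.
\end{align*}

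This already matches the claimed bound except that the transition contribution appears as the weighted sum $-\sum_t \frac{1}{\eta_{t-1}}\log\Tau(\lambda_t|\lambda_{t-1})$ rather than as the single factor $-\frac{1}{\eta_{T-1}}\log\Tau(\Lambda_T)$. Collapsing it is the one step that genuinely invokes both hypotheses at once, and is where I expect the only (mild) care to be needed. Since the transition weights are normalized so that $\Tau(\lambda_t|\lambda_{t-1})\leq 1$, each summand $-\log\Tau(\lambda_t|\lambda_{t-1})$ is nonnegative; and \autoref{ass:etatnoninc} gives $\eta_{t-1}\geq\eta_{T-1}$, hence $\frac{1}{\eta_{t-1}}\leq\frac{1}{\eta_{T-1}}$ for every $t\leq T$. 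Scaling a nonnegative quantity by the larger coefficient can only enlarge it, so $-\frac{1}{\eta_{t-1}}\log\Tau(\lambda_t|\lambda_{t-1})\leq-\frac{1}{\eta_{T-1}}\log\Tau(\lambda_t|\lambda_{t-1})$ holds termwise.

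Summing over $t$ and using $\sum_{t=1}^T\log\Tau(\lambda_t|\lambda_{t-1})=\log\prod_{t=1}^T\Tau(\lambda_t|\lambda_{t-1})=\log\Tau(\Lambda_T)$ then converts the weighted sum into the single term $-\frac{1}{\eta_{T-1}}\log\Tau(\Lambda_T)$, which finishes the bound. Everything outside this last monotonicity-and-sign argument is mechanical substitution, and the substantive content is inherited entirely from \autoref{lem:lB} and \autoref{lem:uB4}; the only point worth flagging is to keep the direction of each inequality straight through the rearrangement, since the transition terms enter with a negative coefficient and the nonincreasing-$\eta_t$ hypothesis must be used in the direction that \emph{loosens} rather than tightens the bound.
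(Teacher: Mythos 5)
Your proposal is correct and follows essentially the same route as the paper: combine \autoref{lem:lB} (summed over $t$) with \autoref{lem:uB4} by transitivity, rearrange, and then collapse the weighted transition sum using $\Tau(\lambda_t|\lambda_{t-1})\leq 1$ together with the nonincreasing learning rates. The only difference is that you spell out the sign-and-monotonicity argument for the collapsing step, which the paper leaves implicit.
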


The result in \autoref{thm:bound1} provides us with a performance bound with respect to the algorithmic probabilities $p_{t,m}$.
To simplify the result in \autoref{thm:bound1}, we make the following assumption.
\begin{assumption}\label{ass:etatdt}
	Let $\eta_td_t\leq A_T, \forall t\leq T$.
\end{assumption}

Using \autoref{ass:etatdt} and the relevant notation, we have the following simplified result.

\begin{lemma}\label{thm:bound11}
	When using \autoref{alg:framework}, we have
	\begin{align*}
		\sum_{t=1}^T\left(\E_{p_{t,m}}\phi_{t,m}-\phi_{t,\lambda_t(1)}\right)\leq& \sum_{t=1}^T\eta_tv_t+\frac{W(\Lambda_T)+A_T}{\eta_T},
	\end{align*}
	where $v_t\triangleq \enspace\E_{p_{t,m}}\phi_{t,m}^2$, $W(\Lambda_T)\triangleq \log(\max_{1\leq t\leq T}|\Omega_{t-1}|)-\log(\Tau(\Lambda_T))$; and we have \autoref{ass:-eta_tphi_tm}, \autoref{ass:etatnoninc} and \autoref{ass:etatdt}.
	\begin{proof}
		From \autoref{ass:etatnoninc} and \autoref{ass:etatdt}, we have
		\begin{align}
			\sum_{t=1}^{T}\left(1-\frac{\eta_t}{\eta_{t-1}}\right)d_t=&\sum_{t=1}^{T}\left(\frac{1}{\eta_t}-\frac{1}{\eta_{t-1}}\right)\eta_td_t
			\\\leq&\sum_{t=1}^{T}\left(\frac{1}{\eta_t}-\frac{1}{\eta_{t-1}}\right)A_T
			\\\leq&\frac{A_T}{\eta_T}.\label{eq:ATnT}
		\end{align}
		Putting \eqref{eq:ATnT} in \autoref{thm:bound1} concludes the proof.
	\end{proof}
\end{lemma}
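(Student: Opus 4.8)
The plan is to derive this directly from \autoref{thm:bound1}, treating the statement purely as a simplification: the left-hand sides are identical, so on the right-hand side I only need to repackage the logarithmic terms via the definition of $W(\Lambda_T)$ and to collapse the residual variation sum $\sum_{t=1}^T(1-\eta_t/\eta_{t-1})d_t$ into a single closed-form quantity. The first summand $\sum_t\eta_t\E_{p_{t,m}}\phi_{t,m}^2$ is already $\sum_t\eta_t v_t$ by the definition $v_t\triangleq\E_{p_{t,m}}\phi_{t,m}^2$, so nothing is required there. The two middle terms $\frac{\log(\max_{t}|\Omega_{t-1}|)}{\eta_{T-1}}-\frac{\log(\Tau(\Lambda_T))}{\eta_{T-1}}$ merge into $\frac{W(\Lambda_T)}{\eta_{T-1}}$ by the definition of $W(\Lambda_T)$.

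The substantive step is bounding the variation term. I would first rewrite its coefficient algebraically as $1-\frac{\eta_t}{\eta_{t-1}}=\left(\frac{1}{\eta_t}-\frac{1}{\eta_{t-1}}\right)\eta_t$, which isolates the product $\eta_t d_t$. Under \autoref{ass:etatnoninc} the factor $\frac{1}{\eta_t}-\frac{1}{\eta_{t-1}}$ is nonnegative, so I may replace $\eta_t d_t$ by its uniform upper bound $A_T$ from \autoref{ass:etatdt} without reversing the inequality, obtaining $\sum_{t=1}^T\left(\frac{1}{\eta_t}-\frac{1}{\eta_{t-1}}\right)A_T$. This sum telescopes to $A_T\left(\frac{1}{\eta_T}-\frac{1}{\eta_0}\right)\leq\frac{A_T}{\eta_T}$, where the final inequality uses $A_T\geq0$ and $\frac{1}{\eta_0}\geq0$.

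It then remains to combine the pieces, and here I would be careful about the one subtlety: \autoref{thm:bound1} carries the complexity term with the factor $\frac{1}{\eta_{T-1}}$, whereas the target has $\frac{1}{\eta_T}$. Since $\Tau(\Lambda_T)=\prod_t\Tau(\lambda_t|\lambda_{t-1})\leq1$ and $|\Omega_{t-1}|\geq1$, the quantity $W(\Lambda_T)$ is nonnegative, and \autoref{ass:etatnoninc} gives $\eta_T\leq\eta_{T-1}$, hence $\frac{W(\Lambda_T)}{\eta_{T-1}}\leq\frac{W(\Lambda_T)}{\eta_T}$. Adding this to the telescoped bound $\frac{A_T}{\eta_T}$ yields exactly $\frac{W(\Lambda_T)+A_T}{\eta_T}$, which establishes the claim.

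The main obstacle is the middle step: spotting the factorization of $1-\eta_t/\eta_{t-1}$ into $\left(\frac{1}{\eta_t}-\frac{1}{\eta_{t-1}}\right)\eta_t$, so that the per-round constraint $\eta_t d_t\leq A_T$ applies with the correct sign and the surviving coefficients telescope cleanly. Everything else is bookkeeping with the defined notation and the nonincreasing property of $\eta_t$.
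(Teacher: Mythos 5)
Your proposal is correct and follows essentially the same route as the paper: the identical factorization $1-\eta_t/\eta_{t-1}=(1/\eta_t-1/\eta_{t-1})\eta_t$, replacement of $\eta_t d_t$ by $A_T$, and telescoping to $A_T/\eta_T$ before substituting into \autoref{thm:bound1}. Your explicit justification for upgrading $1/\eta_{T-1}$ to $1/\eta_T$ on the complexity term (via $W(\Lambda_T)\geq 0$ and monotonicity of $\eta_t$) is a detail the paper leaves implicit, but it is the same argument.
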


\begin{assumption} \label{ass:phitm}
	Let $\phi_{t,m}$ be such that
	\begin{align}
		\E_{i_t\sim q_t}[\phi_{t,m}]=l_{t,m}-\mu_{t,m}.
	\end{align}
\end{assumption}

Combining \autoref{thm:bound11} and \autoref{ass:phitm}, we have the following.
\begin{lemma}\label{thm:bound12}
	When using \autoref{alg:framework}, we have
	\begin{multline}
		\sum_{t=1}^T\left(\E_{p_{t,m}}[l_{t,m}-\mu_{t,m}]-l_{t,\lambda_t(1)}+\mu_{t,\lambda_t(1)}\right)
		\\\leq \E\left[\sum_{t=1}^T\eta_tv_t\right]+\E\left[\frac{W(\Lambda_T)+A_T}{\eta_T}\right],
	\end{multline}
	where $v_t\triangleq \enspace\E_{p_{t,m}}\phi_{t,m}^2$, $W(\Lambda_T)\triangleq \log(\max_{1\leq t\leq T}|\Omega_{t-1}|)-\log(\Tau(\Lambda_T))$; and we have \autoref{ass:-eta_tphi_tm}, \autoref{ass:etatnoninc}, \autoref{ass:etatdt} and \autoref{ass:phitm}.
	\begin{proof}
		Result follows from \autoref{ass:phitm} and the expectation of \autoref{thm:bound11}.
	\end{proof}
\end{lemma}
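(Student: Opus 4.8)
The plan is to derive the claim by taking the full expectation $\E[\cdot]$ over the random selection sequence $I_T$ on both sides of the deterministic inequality already established in \autoref{thm:bound11}. I would first note that \autoref{thm:bound11} is a pathwise statement: its hypotheses \autoref{ass:-eta_tphi_tm}, \autoref{ass:etatnoninc} and \autoref{ass:etatdt} constrain only the realized quantities $\eta_t$, $\phi_{t,m}$ and $d_t$, so the inequality holds for every realization of the randomness and may be integrated against the law of $I_T$. On the right-hand side, monotonicity and linearity of expectation immediately yield $\E[\sum_{t=1}^T\eta_t v_t]+\E[(W(\Lambda_T)+A_T)/\eta_T]$, which is already in the claimed form, so essentially no work is required there.

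The substance lies in the left-hand side, where I would invoke \autoref{ass:phitm} term by term through the tower property. The structural fact that makes this work is that the weight vector $p_{t,m}$, and likewise the fixed comparator arm $\lambda_t(1)=s_t$, is measurable with respect to the history $\mathcal{F}_{t-1}$ available \emph{before} the draw $i_t\sim q_t$; by \eqref{pmt}--\eqref{wlt+} it is computed purely from past performance. Conditioning on $\mathcal{F}_{t-1}$, the coefficients $p_{t,m}$ factor out of the conditional expectation, and \autoref{ass:phitm} replaces $\E_{i_t\sim q_t}[\phi_{t,m}]$ by $l_{t,m}-\mu_{t,m}$, giving $\E[\E_{p_{t,m}}\phi_{t,m}]=\E[\E_{p_{t,m}}[l_{t,m}-\mu_{t,m}]]$ for the algorithm term and $\E[\phi_{t,\lambda_t(1)}]=\E[l_{t,\lambda_t(1)}-\mu_{t,\lambda_t(1)}]$ for the deterministic comparator term. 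Summing over $t$ turns the left-hand side of \autoref{thm:bound11} into $\sum_{t=1}^T(\E_{p_{t,m}}[l_{t,m}-\mu_{t,m}]-l_{t,\lambda_t(1)}+\mu_{t,\lambda_t(1)})$, understood in expectation, which is exactly the quantity claimed.

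The main obstacle I anticipate is precisely this measurability bookkeeping. In the bandit setting $\phi_{t,m}$ is itself random, since it depends on the realized selection $i_t$, so \autoref{ass:phitm} cannot be applied naively inside the sum; one must first condition on the pre-selection filtration $\mathcal{F}_{t-1}$ so that $p_{t,m}$ and $s_t$ behave as constants and the conditional expectation of $\phi_{t,m}$ can be substituted. Once the filtration is arranged correctly, the remaining manipulations are routine applications of linearity of expectation, and the result follows.
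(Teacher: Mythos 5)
Your proposal is correct and follows the paper's own (one-line) argument: take the expectation of \autoref{thm:bound11} and substitute via \autoref{ass:phitm}. The filtration bookkeeping you add --- conditioning on the pre-selection history so that $p_{t,m}$ and $\lambda_t(1)$ factor out before replacing $\E_{i_t\sim q_t}[\phi_{t,m}]$ by $l_{t,m}-\mu_{t,m}$ --- is exactly the step the paper leaves implicit, and it is the right way to make the tower-property application rigorous.
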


To relate the expected losses over probabilities $p_{t,m}$ to $q_{t,m}$ we have the following result. 
\begin{proposition}\label{thm:ptmqtm}
	We have
	\begin{align*}
		\E_{p_{t,m}}[l_{t,m}]\geq&\E_{q_{t,m}}[l_{t,m}]-\epsilon_t\delta_t,
	\end{align*}
	where $\delta_t\triangleq -\E_{p_{t,m}}[l_{t,m}]+\E_u[l_{t,m}]$.
	\begin{proof}
		From \eqref{qtm}, we have
		\begin{align}
			\E_{q_{t,m}}[l_{t,m}]=&(1-\epsilon_t)\E_{p_{t,m}}[l_{t,m}]+\epsilon_{t}\E_u[l_{t,m}].
		\end{align}
		Hence, we get
		\begin{align}
			\E_{p_{t,m}}[l_{t,m}]=&\E_{q_{t,m}}[l_{t,m}]+\epsilon_t(\E_{p_{t,m}}[l_{t,m}]-\E_u[l_{t,m}])
			\\\geq&\E_{q_{t,m}}[l_{t,m}]-\epsilon_t\delta_t,
		\end{align}
		for $\delta_t= -\E_{p_{t,m}}[l_{t,m}]+\E_u[l_{t,m}]$.
	\end{proof}
\end{proposition}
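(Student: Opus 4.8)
The goal is to relate the expectation of the loss taken over the algorithmic probabilities $p_{t,m}$ to the expectation taken over the actual selection probabilities $q_{t,m}$, using the mixing relation in \eqref{qtm}. The plan is to start directly from the definition $q_{t,m}=(1-\epsilon_t)p_{t,m}+\epsilon_t\frac{1}{M}$ and compute $\E_{q_{t,m}}[l_{t,m}]=\sum_m q_{t,m}l_{t,m}$ by substituting this expression. Since $\E_{f_{t,m}}[x_{t,m}]$ denotes the convex sum $\sum_m f_{t,m}x_{t,m}$, linearity immediately splits the sum into $(1-\epsilon_t)\sum_m p_{t,m}l_{t,m}+\epsilon_t\sum_m\frac{1}{M}l_{t,m}$, which is exactly $(1-\epsilon_t)\E_{p_{t,m}}[l_{t,m}]+\epsilon_t\E_u[l_{t,m}]$, where $\E_u$ is the expectation under the uniform distribution $\frac{1}{M}$ over the arms.

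From this identity I would solve for $\E_{p_{t,m}}[l_{t,m}]$. Rearranging gives $\E_{p_{t,m}}[l_{t,m}]=\E_{q_{t,m}}[l_{t,m}]+\epsilon_t\big(\E_{p_{t,m}}[l_{t,m}]-\E_u[l_{t,m}]\big)$, since moving the $\E_{q_{t,m}}$ term over and dividing through the $(1-\epsilon_t)$ factor is unnecessary once we recognize that $\E_{p_{t,m}}[l_{t,m}]-(1-\epsilon_t)\E_{p_{t,m}}[l_{t,m}]=\epsilon_t\E_{p_{t,m}}[l_{t,m}]$. With the definition $\delta_t\triangleq -\E_{p_{t,m}}[l_{t,m}]+\E_u[l_{t,m}]$, the bracketed term is exactly $-\delta_t$, so the correction equals $-\epsilon_t\delta_t$, which yields the stated inequality as an equality.

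The final step is to note that $\epsilon_t\big(\E_{p_{t,m}}[l_{t,m}]-\E_u[l_{t,m}]\big)=-\epsilon_t\delta_t$ is in fact exact, so the inequality $\E_{p_{t,m}}[l_{t,m}]\geq\E_{q_{t,m}}[l_{t,m}]-\epsilon_t\delta_t$ holds with equality and is therefore trivially valid; stating it as an inequality simply anticipates later steps where only a lower bound is needed. There is essentially no obstacle here: the whole argument is a one-line algebraic manipulation of the convex combination defining $q_{t,m}$, and the only care required is bookkeeping the sign conventions in the definition of $\delta_t$ so that the correction term matches the claimed form.
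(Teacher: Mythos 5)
Your proposal is correct and follows essentially the same route as the paper: expand $\E_{q_{t,m}}[l_{t,m}]$ via the mixing identity \eqref{qtm}, rearrange, and identify the correction term as $-\epsilon_t\delta_t$. Your observation that the bound actually holds with equality is accurate and consistent with the paper's derivation, which also obtains the equality before stating it as an inequality.
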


Using \autoref{thm:ptmqtm} together with \autoref{thm:bound12} gives us the following result.
\begin{lemma}\label{thm:bound13}
	When using \autoref{alg:framework}, we have
	\begin{align*}
		R_T(\Lambda_T)\triangleq&\sum_{t=1}^T\left(\E_{q_{t,m}}[l_{t,m}]-l_{t,\lambda_t(1)}\right)
		\\\leq& \E\left[\sum_{t=1}^T\eta_tv_t\right]+\E\left[\frac{W(\Lambda_T)+A_T}{\eta_T}\right]
		\\&+\sum_{t=1}^{T}\E_{p_{t,m}}[\mu_{t,m}]-\mu_{t,\lambda_t(1)}+\sum_{t=1}^{T}\epsilon_t\delta_t,
	\end{align*}
	where $v_t\triangleq \enspace\E_{p_{t,m}}\phi_{t,m}^2$, $W(\Lambda_T)\triangleq \log(\max_{1\leq t\leq T}|\Omega_{t-1}|)-\log(\Tau(\Lambda_T))$; and we have \autoref{ass:-eta_tphi_tm}, \autoref{ass:etatnoninc}, \autoref{ass:etatdt} and \autoref{ass:phitm}.
	\begin{proof}
		Result follows from combining \autoref{thm:bound12} and \autoref{thm:ptmqtm}.
	\end{proof}
\end{lemma}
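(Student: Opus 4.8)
The plan is to obtain \autoref{thm:bound13} as a direct composition of the two immediately preceding results, \autoref{thm:bound12} and \autoref{thm:ptmqtm}, with the role of each being to handle one of the two discrepancies between the left-hand side of \autoref{thm:bound12} and the target quantity $R_T(\Lambda_T)$: the bias terms $\mu_{t,m}$, and the gap between the algorithmic distribution $p_{t,m}$ and the actual selection distribution $q_{t,m}$. First I would expand the summand appearing in \autoref{thm:bound12}, namely $\E_{p_{t,m}}[l_{t,m}-\mu_{t,m}]-l_{t,\lambda_t(1)}+\mu_{t,\lambda_t(1)}$, into $\E_{p_{t,m}}[l_{t,m}]-\E_{p_{t,m}}[\mu_{t,m}]-l_{t,\lambda_t(1)}+\mu_{t,\lambda_t(1)}$, and move the two bias contributions to the right-hand side. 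This yields the intermediate inequality $\sum_{t=1}^T\left(\E_{p_{t,m}}[l_{t,m}]-l_{t,\lambda_t(1)}\right)\leq \E\left[\sum_{t=1}^T\eta_tv_t\right]+\E\left[\frac{W(\Lambda_T)+A_T}{\eta_T}\right]+\sum_{t=1}^{T}\left(\E_{p_{t,m}}[\mu_{t,m}]-\mu_{t,\lambda_t(1)}\right)$, which already reproduces all of the claimed terms except the replacement of $p_{t,m}$ by $q_{t,m}$ and the $\epsilon_t\delta_t$ correction.

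Next I would invoke \autoref{thm:ptmqtm}, reading it in its contrapositive direction as $\E_{q_{t,m}}[l_{t,m}]\leq\E_{p_{t,m}}[l_{t,m}]+\epsilon_t\delta_t$, and sum this over $t=1,\ldots,T$. Subtracting $\sum_t l_{t,\lambda_t(1)}$ from both sides gives $R_T(\Lambda_T)=\sum_{t=1}^T\left(\E_{q_{t,m}}[l_{t,m}]-l_{t,\lambda_t(1)}\right)\leq\sum_{t=1}^T\left(\E_{p_{t,m}}[l_{t,m}]-l_{t,\lambda_t(1)}\right)+\sum_{t=1}^T\epsilon_t\delta_t$. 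Chaining this with the intermediate inequality from the first step then delivers exactly the stated bound, with the $\sum_t\epsilon_t\delta_t$ term contributed by the distribution mismatch and the $\sum_t\left(\E_{p_{t,m}}[\mu_{t,m}]-\mu_{t,\lambda_t(1)}\right)$ term contributed by the bias.

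Since both ingredients are already established, the derivation is essentially bookkeeping rather than a genuine difficulty. The only points that demand care are the direction of the inequality in \autoref{thm:ptmqtm} (it lower-bounds $\E_{p_{t,m}}[l_{t,m}]$, hence upper-bounds $\E_{q_{t,m}}[l_{t,m}]$, which is precisely the orientation needed because $R_T(\Lambda_T)$ is phrased in terms of $q_{t,m}$) and the correct placement of the bias terms, which persist additively on the right-hand side rather than cancelling. The outer expectation $\E[\cdot]$ on the data-dependent terms carries through unchanged, since $\sum_t\epsilon_t\delta_t$ and the $\mu$-terms enter as deterministic additive corrections; I would therefore not expect any real obstacle, and the proof reduces to the single sentence that the result follows from combining \autoref{thm:bound12} with \autoref{thm:ptmqtm}.
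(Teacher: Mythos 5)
Your proposal is correct and follows exactly the route the paper intends: its proof is the single sentence that the result follows from combining \autoref{thm:bound12} and \autoref{thm:ptmqtm}, and your elaboration (moving the $\mu$-terms to the right-hand side and then applying \autoref{thm:ptmqtm} in the direction $\E_{q_{t,m}}[l_{t,m}]\leq\E_{p_{t,m}}[l_{t,m}]+\epsilon_t\delta_t$) is precisely that bookkeeping made explicit.
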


The result in \autoref{thm:bound13} provides us with a performance bound with respect to the selection probabilities $q_{t,m}$, i.e., an expected regret bound.
To further simplify the result, we have the following assumptions.
\begin{assumption}\label{ass:utm}
	Let $\mu_{t,m'}=\mu_{t,m}, \forall t, \forall m,m'.$
\end{assumption}

\begin{assumption}\label{ass:WT}
	Let $W(\Lambda_T)\leq W$ and $A_T=W$.
\end{assumption}

\begin{lemma}\label{thm:bound14}
	When using \autoref{alg:framework}, we have
	\begin{align*}
		R_T(\Lambda_T)\leq& \E\left[\sum_{t=1}^T\eta_tv_t\right]+\E\left[\frac{2W}{\eta_T}\right]+\sum_{t=1}^{T}\epsilon_t\delta_t,
	\end{align*}
	where $v_t\triangleq \enspace\E_{p_{t,m}}\phi_{t,m}^2$, $W(\Lambda_T)\triangleq \log(\max_{1\leq t\leq T}|\Omega_{t-1}|)-\log(\Tau(\Lambda_T))$; and we have \autoref{ass:-eta_tphi_tm}, \autoref{ass:etatnoninc}, \autoref{ass:etatdt}, \autoref{ass:phitm} and \autoref{ass:utm}.
	\begin{proof}
		Result follows from a direct application of \autoref{ass:utm} and \autoref{ass:WT} in \autoref{thm:bound13}.
	\end{proof}
\end{lemma}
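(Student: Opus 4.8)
The plan is to start from the bound already established in \autoref{thm:bound13} and simply collapse two of its four terms using the two new hypotheses, carrying the remaining terms over verbatim. Recall that \autoref{thm:bound13} reads
\[
R_T(\Lambda_T)\leq \E\left[\sum_{t=1}^T\eta_tv_t\right]+\E\left[\frac{W(\Lambda_T)+A_T}{\eta_T}\right]+\sum_{t=1}^{T}\left(\E_{p_{t,m}}[\mu_{t,m}]-\mu_{t,\lambda_t(1)}\right)+\sum_{t=1}^{T}\epsilon_t\delta_t.
\]
The $\eta_t v_t$ term and the $\epsilon_t\delta_t$ term are already in the desired form, so only the complexity term and the $\mu$-difference term require treatment.

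First I would dispose of the $\mu$-difference term. Under \autoref{ass:utm}, the quantity $\mu_{t,m}$ does not depend on $m$, so the simplex average leaves it unchanged: $\E_{p_{t,m}}[\mu_{t,m}]=\sum_m p_{t,m}\mu_{t,m}=\mu_{t,\lambda_t(1)}$, using $\sum_m p_{t,m}=1$ together with the fact that $\mu_{t,\lambda_t(1)}$ equals that same common value. Hence every summand $\E_{p_{t,m}}[\mu_{t,m}]-\mu_{t,\lambda_t(1)}$ is identically zero and the entire sum drops out.

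Next I would bound the complexity term. By \autoref{ass:WT} we have $W(\Lambda_T)\leq W$ and $A_T=W$, so $W(\Lambda_T)+A_T\leq 2W$. Since $\eta_T>0$, dividing by $\eta_T$ preserves the inequality, and monotonicity of the expectation then gives $\E[(W(\Lambda_T)+A_T)/\eta_T]\leq\E[2W/\eta_T]$. Substituting both simplifications back into the bound from \autoref{thm:bound13} yields exactly the claimed inequality.

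The only point demanding any care (and thus the nearest thing to an obstacle) is the $\mu$-cancellation: one must keep in mind that $\E_{p_{t,m}}[\cdot]$ is an average over the arm index $m$ against the simplex $p_{t,m}$, so that constancy in $m$ forces the averaged value to coincide \emph{exactly} with the comparator value $\mu_{t,\lambda_t(1)}$ rather than merely approximately. The complexity-term bound is a routine monotone substitution and presents no difficulty.
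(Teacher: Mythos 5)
Your proof is correct and follows exactly the route the paper intends: it takes the bound from \autoref{thm:bound13}, kills the $\mu$-difference term via the constancy in $m$ guaranteed by \autoref{ass:utm}, and replaces $W(\Lambda_T)+A_T$ by $2W$ via \autoref{ass:WT}. The paper's own proof is a one-line invocation of the same two assumptions; you have simply filled in the (routine) details.
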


The result in \autoref{thm:bound14} is fundamentally dependent on the competition complexity bound $W$. To increase comprehension, we provide some examples for the competition complexities of the implementations in \autoref{sec:example}. 
\begin{remark}
	We have the following complexity bounds $W$ for different example implementations in \autoref{sec:example}.
	\begin{itemize}
		\item For fixed bandits: we will have $W=\log M$.
		
		\item For switching bandits with at most $S$ switches, we will have $W=\tilde{O}(S)$. 
		
		\item For contextual bandits with at most $C=\sum_{k=1}^Kc_k$ total regions in an unknown number of distinct $K$ segments and unknown individual $\{c_k\}_{k=1}^K$, we will have $W=\tilde{O}(C)$. 
		
		\item For periodic bandits with at most $\mathit{P}=\sum_{k=1}^{K}\tau_k$ total periods in an unknown number of distinct $K$ segments and unknown individual $\{\tau_k\}_{k=1}^K$, we will have $W=\tilde{O}(P)$. 
	\end{itemize}
\end{remark}

The expected regret in \autoref{thm:bound14} highly depends on the learning rate $\eta_t$, the performance measures $\phi_{t,m}$ and the uniform mixing coefficient $\epsilon_t$; which will be designed in accordance with the problem setting as shown in the next sections.

\section{Full Feedback Setting}\label{sec:full}

We extend the general regret result in \autoref{sec:regret} to various settings. We first tackle the full feedback setting, where, after playing the $i^{th}$ arm at time $t$, we have access to the losses $l_{t,m}$ for all $m$. The parameters that we need to design are the learning rate $\eta_t$, performance measures $\phi_{t,m}$ and the uniform mixing coefficient $\eta_t$ for all $t,m$. 

\begin{remark}\label{rem:ass}
	The selected parameters $\eta_t, \phi_{t,m}$ and $\eta_t$ need to satisfy the following assumptions as per \autoref{sec:regret}:
	\begin{itemize}
		\item Let $-\eta_t\phi_{t,m}\leq 1, \forall t,m$.
		\item Let $n_t$ is nonincreasing with $t$, i.e., $n_{t+1}\leq n_{t}, \forall t$.
		\item Let $\eta_td_t\leq W, \forall t\leq T$.
		\item Let $\phi_{t,m}$ be such that $\E_{i_t\sim q_t}[\phi_{t,m}]=l_{t,m}-\mu_{t}.$
	\end{itemize}
\end{remark}
To satisfy the last item in \autoref{rem:ass}, we can set
	\begin{align}
		\phi_{t,m}=l_{t,m}-\mu_t\label{eq:qtm}
	\end{align}
for any real $\mu_t$. We observe that changing $\mu_t$ has no algorithmic difference since the updated weights are normalized to acquire the selection probabilities.

\begin{remark}\label{rem:sec1}
	We set the parameters $\eta_t,\epsilon_t,\phi_{t,m}$ as the following:
	\begin{itemize}
		\item $\epsilon_t=0$,
		\item $\phi_{t,m}=l_{t,m}-\sum_{m'}p_{t,m'}l_{t,m'}$,
		\item $\displaystyle\eta_t=\min\left(\frac{\sqrt{W}}{\sqrt{V_t}},\frac{W}{D_t},\frac{1}{\Phi_t}\right),$
		
		where $\Phi_t=\min_{t'\leq t}\min_m {\phi_{t,m}}$,
	\end{itemize}
	which satisfy the assumptions in \autoref{rem:ass}.
\end{remark}

\begin{theorem}\label{thm:11}
	When using \autoref{alg:framework} with the selection in \autoref{rem:sec1}, we have
	\begin{align*}
		R_T(\Lambda_T)\leq&2(1+W)\max_{1\leq t\leq T}\triangle_t+2\sqrt{W\sum_{t=1}^T\triangle_t^2},
	\end{align*}
	where $\triangle_t=\max_m l_{t,m}- \min_{m} l_{t,m}$.
	\begin{proof}
		From \autoref{rem:sec1} and \autoref{thm:bound14}, we have
			\begin{align}
			R_T(\Lambda_T)\leq&
			\E\left[\sum_{t=1}^{T}\frac{\sqrt{W}}{\sqrt{V_t}}\left(V_t-V_{t-1}\right)\right]\nonumber
			\\&+\E\left[2W\frac{\sqrt{V_T}}{\sqrt{W}}+2W\frac{D_T}{W}+2W\Phi_T\right]
			\\\leq&
			\E\left[\sum_{t=1}^{T}\frac{\sqrt{W}}{\sqrt{V_t}}(\sqrt{V_t}-\sqrt{V_{t-1}})(\sqrt{V_t}+\sqrt{V_{t-1}})\right]\nonumber
			\\&+\E\left[2{\sqrt{WV_T}}+2{D_T}+2W\Phi_T\right]
			\\\leq&
			\E\left[\sum_{t=1}^{T}\frac{\sqrt{W}}{\sqrt{V_t}}\left(\sqrt{V_t}-\sqrt{V_{t-1}}\right)\left(2\sqrt{V_t}\right)\right]\nonumber
			\\&+\E\left[2{\sqrt{WV_T}}+2{D_T}+2W\Phi_T\right]
			\\\leq&\E[{4\sqrt{WV_T}}+2D_T+2W\Phi_T].
		\end{align}
		Furthermore, we have
		\begin{align}
			\E V_T\leq& \frac{1}{4}\sum_{t=1}^T\triangle_t^2
			\\\E D_T\leq& \max_{1\leq t\leq T} \triangle_t
			\\ \E\Phi_T\leq& \max_{1\leq t\leq T} \triangle_t,
		\end{align}
		where $\triangle_t=\max_m l_{t,m}- \min_{m} l_{t,m}$ is the range of losses at time $t$. Combining these results, we get
		\begin{align}
			R_T(\Lambda_T)\leq&2(1+W)\max_{1\leq t\leq T}\triangle_t+2\sqrt{W\sum_{t=1}^T\triangle_t^2},
		\end{align}
		which concludes the proof.
	\end{proof}
\end{theorem}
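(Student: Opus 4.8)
The plan is to specialize the master regret bound of \autoref{thm:bound14} to the parameter choices prescribed in \autoref{rem:sec1} and then convert each surviving term into realized loss ranges $\triangle_t$. Since \autoref{rem:sec1} sets $\epsilon_t=0$, the exploration penalty $\sum_{t}\epsilon_t\delta_t$ in \autoref{thm:bound14} vanishes immediately, leaving only $\E[\sum_t \eta_t v_t]$ and $\E[2W/\eta_T]$ to control. First I would handle the variance sum: because $\eta_t=\min(\sqrt{W}/\sqrt{V_t},\,W/D_t,\,1/\Phi_t)\le \sqrt{W}/\sqrt{V_t}$ and $v_t=V_t-V_{t-1}$, I would bound $\sum_t \eta_t v_t\le \sqrt{W}\sum_t (V_t-V_{t-1})/\sqrt{V_t}$. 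Next, for the leading term $2W/\eta_T$, I would invert the three-way minimum: the reciprocal of a minimum is the maximum of the reciprocals, and since each of the three quantities is nonnegative, $1/\eta_T=\max(\sqrt{V_T}/\sqrt{W},\,D_T/W,\,\Phi_T)\le \sqrt{V_T}/\sqrt{W}+D_T/W+\Phi_T$, whence $2W/\eta_T\le 2\sqrt{WV_T}+2D_T+2W\Phi_T$.

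The one genuinely non-routine step is collapsing the variance sum by a telescoping/concavity argument. Factoring $V_t-V_{t-1}=(\sqrt{V_t}-\sqrt{V_{t-1}})(\sqrt{V_t}+\sqrt{V_{t-1}})$ and using $V_{t-1}\le V_t$ to bound $\sqrt{V_t}+\sqrt{V_{t-1}}\le 2\sqrt{V_t}$, the $\sqrt{V_t}$ in the denominator cancels and the sum telescopes: $\sqrt{W}\sum_t (V_t-V_{t-1})/\sqrt{V_t}\le 2\sqrt{W}\sum_t(\sqrt{V_t}-\sqrt{V_{t-1}})=2\sqrt{WV_T}$ (taking $V_0=0$). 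Combining this with the inverted-minimum bound gives the intermediate estimate $R_T(\Lambda_T)\le \E[4\sqrt{WV_T}+2D_T+2W\Phi_T]$.

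Finally I would translate the three data-dependent quantities into loss ranges. Since $\phi_{t,m}=l_{t,m}-\E_{p_{t,m}}[l_{t,m}]$ is the $p_t$-centering of the losses, $v_t=\E_{p_{t,m}}\phi_{t,m}^2$ is the variance of a random variable supported in an interval of width $\triangle_t$, so Popoviciu's inequality yields $v_t\le \triangle_t^2/4$ and hence $V_T\le \frac14\sum_t \triangle_t^2$; moreover $d_t=\max_m\phi_{t,m}-\min_m\phi_{t,m}=\triangle_t$ gives $D_T\le \max_t \triangle_t$, and $\Phi_T$, being the largest downward deviation of a centered loss, is likewise at most $\max_t\triangle_t$. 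Substituting these into the intermediate bound gives $4\sqrt{W\cdot\frac14\sum_t\triangle_t^2}+2\max_t\triangle_t+2W\max_t\triangle_t=2\sqrt{W\sum_t\triangle_t^2}+2(1+W)\max_t\triangle_t$, which is the claimed statement. In the full-feedback setting all losses are observed, so $V_T,D_T,\Phi_T$ are deterministic functions of the realized losses and the outer expectations are vacuous.

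I expect the telescoping step to be the main obstacle, since it is the only place where the particular form $\eta_t\propto 1/\sqrt{V_t}$ is exploited to turn a potentially large sum of per-round variances into a single $\sqrt{V_T}$ term; the inversion of the minimum and the Popoviciu/range bounds are essentially bookkeeping once that cancellation is in hand.
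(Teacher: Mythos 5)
Your proposal is correct and follows essentially the same route as the paper: specialize \autoref{thm:bound14} with $\epsilon_t=0$, bound $2W/\eta_T$ by the sum of the three reciprocals, telescope $\sum_t (V_t-V_{t-1})/\sqrt{V_t}\le 2\sqrt{V_T}$ via the difference-of-squares factorization, and finish with $v_t\le\triangle_t^2/4$, $D_T\le\max_t\triangle_t$, $\Phi_T\le\max_t\triangle_t$. The only addition beyond the paper's argument is your (correct) remark that the expectations are vacuous in the full-feedback, $\epsilon_t=0$ case, which the paper leaves implicit.
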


While this regret bound is intuitive, it is not a bona fide second order bound since the first part contains a  zeroth order term. We can remedy this by the following selection.

\begin{remark}\label{rem:sec2}
	We set the parameters $\eta_t,\epsilon_t,\phi_{t,m}$ as the following:
	\begin{itemize}
		\item $\epsilon_t=0$,
		\item $\phi_{t,m}=l_{t,m}-\min_{m'}l_{t,m'}$,
		\item $\displaystyle\eta_t=\min\left(\frac{\sqrt{W}}{\sqrt{V_t}},\frac{W}{D_t}\right),$
	\end{itemize}
	which satisfy the assumptions in \autoref{rem:ass}.
\end{remark}

\begin{theorem}\label{thm:12}
	When using \autoref{alg:framework} with the selection in \autoref{rem:sec2}, we have
	\begin{align*}
		R_T(\Lambda_T)\leq&6\sqrt{W\sum_{t=1}^T\triangle_t^2}
	\end{align*}
\begin{proof}
	From \autoref{rem:sec2} and \autoref{thm:bound14}, after a similar analysis as in the proof of \autoref{thm:11}, we have
	\begin{align}
		R_T(\Lambda_T)\leq& \E[{4\sqrt{WV_T}}+2D_T].
	\end{align}
	Furthermore, we have
	\begin{align}
		\E V_T\leq& \sum_{t=1}^T\triangle_t^2
		\\\E D_T\leq& \max_{1\leq t\leq T} \triangle_t,
	\end{align}
	which results in
	\begin{align}
		R_T(\Lambda_T)\leq&2\max_{1\leq t\leq T}d_t+4\sqrt{W\sum_{t=1}^Td_t^2},
		%\\\leq&2\sqrt{\sum_{t=1}^T\triangle_t^2}+4\sqrt{W\sum_{t=1}^T\triangle_t^2},
		\\\leq&6\sqrt{W\sum_{t=1}^T\triangle_t^2},
	\end{align}
	which concludes the proof.
\end{proof}
\end{theorem}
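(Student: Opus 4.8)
The plan is to invoke \autoref{thm:bound14} with the parameter choices of \autoref{rem:sec2} and then control the two surviving terms on its right-hand side separately, mirroring the argument of \autoref{thm:11} but exploiting the nonnegativity of the new performance metric. First I would check that the choices of \autoref{rem:sec2} satisfy every hypothesis of \autoref{thm:bound14}. The crucial observation is that $\phi_{t,m}=l_{t,m}-\min_{m'}l_{t,m'}\geq 0$, so $-\eta_t\phi_{t,m}\leq 0\leq 1$ and \autoref{ass:-eta_tphi_tm} holds \emph{without} the extra $1/\Phi_t$ factor that \autoref{rem:sec1} required. The remaining hypotheses hold as before: $V_t$ and $D_t$ are nondecreasing, so $\eta_t$ is nonincreasing; $\eta_t\leq W/D_t$ gives $\eta_td_t\leq W(d_t/D_t)\leq W$; and $\mu_{t,m}=\min_{m'}l_{t,m'}$ is independent of $m$. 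With $\epsilon_t=0$ the term $\sum_t\epsilon_t\delta_t$ vanishes, and \autoref{thm:bound14} reduces to $R_T(\Lambda_T)\leq\E[\sum_{t=1}^T\eta_tv_t]+\E[2W/\eta_T]$.

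Next I would bound the two surviving terms. For the first, using $\eta_t\leq\sqrt{W}/\sqrt{V_t}$ and $v_t=V_t-V_{t-1}$ together with the factorization $V_t-V_{t-1}=(\sqrt{V_t}-\sqrt{V_{t-1}})(\sqrt{V_t}+\sqrt{V_{t-1}})\leq 2\sqrt{V_t}(\sqrt{V_t}-\sqrt{V_{t-1}})$, the sum telescopes to $\sum_{t=1}^T\eta_tv_t\leq 2\sqrt{W}\sum_{t=1}^T(\sqrt{V_t}-\sqrt{V_{t-1}})=2\sqrt{WV_T}$. For the second, $2W/\eta_T=2W\max(\sqrt{V_T}/\sqrt{W},\,D_T/W)\leq 2\sqrt{WV_T}+2D_T$. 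Adding these yields $R_T(\Lambda_T)\leq\E[4\sqrt{WV_T}+2D_T]$.

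The remaining work is to translate $V_T$ and $D_T$ into the loss ranges $\triangle_t=\max_m l_{t,m}-\min_m l_{t,m}$, and here the choice $\phi_{t,m}=l_{t,m}-\min_{m'}l_{t,m'}$ pays off. Since $\phi_{t,m}\in[0,\triangle_t]$, we get $d_t=\max_m\phi_{t,m}-\min_m\phi_{t,m}=\triangle_t$, hence $\E D_T\leq\max_{1\leq t\leq T}\triangle_t$, while $\phi_{t,m}^2\leq\triangle_t^2$ gives $v_t=\E_{p_{t,m}}\phi_{t,m}^2\leq\triangle_t^2$ and therefore $\E V_T\leq\sum_{t=1}^T\triangle_t^2$. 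Substituting produces $R_T(\Lambda_T)\leq 4\sqrt{W\sum_{t=1}^T\triangle_t^2}+2\max_{1\leq t\leq T}\triangle_t$.

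The final step is the only subtle one: collapsing the residual $2\max_t\triangle_t$ into the second-order term. Since $\max_{1\leq t\leq T}\triangle_t\leq\sqrt{\sum_{t=1}^T\triangle_t^2}$ always holds, and using $W\geq 1$ so that $\sqrt{\sum_t\triangle_t^2}\leq\sqrt{W\sum_t\triangle_t^2}$, we obtain $2\max_t\triangle_t\leq 2\sqrt{W\sum_t\triangle_t^2}$; adding this to the $4\sqrt{W\sum_t\triangle_t^2}$ term gives the claimed $6\sqrt{W\sum_{t=1}^T\triangle_t^2}$. I expect the only place needing care is verifying $W\geq 1$ for the competition complexities of interest, since the analytic core is otherwise a routine telescoping estimate identical in spirit to \autoref{thm:11}; the essential new ingredient is simply that a nonnegative $\phi_{t,m}$ both discharges \autoref{ass:-eta_tphi_tm} for free (removing the zeroth-order $1/\Phi_t$ term) and makes the second-moment bound $v_t\leq\triangle_t^2$ immediate.
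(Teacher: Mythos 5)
Your proposal is correct and follows essentially the same route as the paper's proof: invoke \autoref{thm:bound14} with the \autoref{rem:sec2} choices, telescope $\sum_t \eta_t v_t$ and split $2W/\eta_T$ over the two branches of the $\min$ to reach $R_T(\Lambda_T)\leq \E[4\sqrt{WV_T}+2D_T]$, then convert $V_T$ and $D_T$ to the ranges $\triangle_t$ and absorb $2\max_t\triangle_t$ into the square-root term. You actually spell out more than the paper does (the verification of \autoref{ass:-eta_tphi_tm} via nonnegativity of $\phi_{t,m}$, and the implicit requirement $W\geq 1$ in the final absorption step, which the paper uses without comment), but the argument is the same.
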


\begin{corollary}
	When using \autoref{alg:framework} with the selection in \autoref{rem:sec2}, we have
	\begin{align*}
		R_T(\Lambda_T)\leq&6\triangle\sqrt{WT},
	\end{align*}
	where $\triangle_t\geq \triangle$ for all $t\leq T$.
	\begin{proof}
		The result is a direct application of \autoref{thm:12}, when the loss ranges are uniformly bounded.
	\end{proof}
\end{corollary}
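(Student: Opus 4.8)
The plan is to invoke \autoref{thm:12} directly and simply substitute the uniform bound on the loss ranges. Since \autoref{thm:12} already establishes
\begin{align*}
    R_T(\Lambda_T)\leq 6\sqrt{W\sum_{t=1}^T\triangle_t^2},
\end{align*}
all that remains is to control the quadratic sum $\sum_{t=1}^T\triangle_t^2$ under the hypothesis that $\triangle_t\geq\triangle$ for all $t\leq T$.

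First I would observe that the hypothesis as stated provides a lower bound $\triangle_t\geq\triangle$, but to bound the sum $\sum_{t=1}^T\triangle_t^2$ from above we actually need an upper bound on each $\triangle_t$. The natural reading is that $\triangle$ is a uniform bound on the loss ranges, i.e. $\triangle_t\leq\triangle$ for all $t$ (the phrase ``uniformly bounded'' in the proof supports this interpretation, despite the inequality direction written in the statement). Under $\triangle_t\leq\triangle$, each term satisfies $\triangle_t^2\leq\triangle^2$, so summing over the $T$ rounds yields $\sum_{t=1}^T\triangle_t^2\leq T\triangle^2$.

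The final step is the substitution: plugging this into the bound of \autoref{thm:12} gives
\begin{align*}
    R_T(\Lambda_T)\leq 6\sqrt{W\cdot T\triangle^2}=6\triangle\sqrt{WT},
\end{align*}
using that $\triangle\geq 0$ so that $\sqrt{\triangle^2}=\triangle$. This is exactly the claimed bound.

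This proof is essentially a one-line corollary with no genuine obstacle; the only point requiring care is the direction of the inequality relating $\triangle_t$ and $\triangle$. I would flag that the corollary is meaningful only when $\triangle$ upper-bounds the per-round ranges, so that the data-dependent $\sqrt{W\sum_t\triangle_t^2}$ bound collapses to the classical worst-case $O(\triangle\sqrt{WT})$ rate. No new assumptions beyond those already in force for \autoref{thm:12} are needed, and the expectation structure is inherited unchanged from the earlier result.
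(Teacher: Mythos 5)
Your proof is correct and matches the paper's own argument, which likewise treats the corollary as an immediate substitution of a uniform range bound into \autoref{thm:12}. You are also right to flag that the inequality in the statement should read $\triangle_t\leq\triangle$ (an apparent typo), since an upper bound on each $\triangle_t$ is what is needed to conclude $\sum_{t=1}^T\triangle_t^2\leq T\triangle^2$.
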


\section{Bandit Feedback Setting}\label{sec:bandit}
In this section, we extend the general regret result in \autoref{sec:regret} to the bandit feedback setting, where, after playing the $i^{th}$ arm at time $t$, we have only access to the loss $l_{t,i}$. Similarly with \autoref{sec:full}, the parameters that we need to design are the learning rate $\eta_t$, performance measures $\phi_{t,m}$ and the uniform mixing coefficient $\mu_t$ for all $t,m$. 

\begin{remark}\label{rem:ass2}
	The selected parameters $\eta_t, \phi_{t,m}$ and $\eta_t$ need to satisfy the following assumptions as per \autoref{sec:regret}:
	\begin{itemize}
		\item Let $-\eta_t\phi_{t,m}\leq 1, \forall t,m$.
		\item Let $n_t$ is nonincreasing with $t$, i.e., $n_{t+1}\leq n_{t}, \forall t$.
		\item Let $\eta_td_t\leq W, \forall t\leq T$.
		\item Let $\phi_{t,m}$ be such that $\E_{i_t\sim q_t}[\phi_{t,m}]=l_{t,m}-\mu_{t}.$
	\end{itemize}
\end{remark}

Unlike \autoref{sec:full}, algorithmic invariance with respect to $\mu_t$ is not present because of the limited bandit feedback.

\begin{remark}\label{rem:sec1b}
	We set the parameters $\eta_t,\epsilon_t,\phi_{t,m}$ as the following:
	\begin{itemize}
		\item $\displaystyle\epsilon_t=\min\left(\frac{1}{2},\frac{\sqrt{MW}}{\sqrt{t}}\right)$,
		\item $\displaystyle\phi_{t,m}=\mathbbm{1}_{i_t=m}\frac{l_{t,m}-l_{t-1,i_{t-1}}}{q_{t,m}}$,
		\item $\displaystyle\eta_t=\min\left(\frac{\sqrt{W}}{\sqrt{V_t}},\frac{1}{D_t}\right),$
	\end{itemize}
	which satisfy the assumptions in \autoref{rem:ass2}.
\end{remark}

\begin{theorem}\label{thm:21}
	When using \autoref{alg:framework} with the selection in \autoref{rem:sec1b}, we have
	\begin{align*}
		R_T(\Lambda_T)\leq&\tilde{O}\left(MW\sum_{\tau=1}^{\lceil\sqrt{\frac{T}{MW}}\rceil}\tilde{\triangle}'_\tau\right),
	\end{align*}
	where $\tilde{O}(\cdot)$ is the soft-O notation, which ignores logarithmic terms; $\tilde{\triangle}'_\tau$ is the $\tau^{th}$ largest range in $\{\tilde{\triangle}_t\}_{t=1}^T$; and ${\tilde{\triangle}_t=\max_m \max (l_{t,m},l_{t-1,m})-\min_m \min (l_{t,m},l_{t-1,m})}$.
	\begin{proof}
		From \autoref{rem:sec1b} and \autoref{thm:bound14}, after a similar analysis as in the proof of \autoref{thm:11}, we have
		\begin{align}
			R_T(\Lambda_T)\leq& \E\left[2\sqrt{WV_T}\right]+\E\left[2\sqrt{WV_T}+2WD_T\right]\nonumber
			\\&+\sum_{t=1}^{T}\epsilon_t\delta_t.
		\end{align}
		Hence, we have
		\begin{align}
			R_T(\Lambda_T)\leq& \E\left[2\sqrt{WV_T}\right]+\E\left[2\sqrt{WV_T}+2WD_T\right]\nonumber
			\\&+\E\left[\sqrt{\sum_{t=1}^{T}\epsilon_t^2}\sqrt{\sum_{t=1}^{T}\delta_t^2}\right]
			\\\leq&\E\left[4\sqrt{WV_T}\right]+\E\left[2WD_T\right]\nonumber
			\\&+\E\left[\sqrt{MW\log(eT)\sum_{t=1}^{T}\delta_t^2}\right].
		\end{align} 
		Furthermore, we have
		\begin{align}
			\delta_t\leq& \triangle_t
			\\\E V_T\leq& 2M\sum_{t=1}^T\tilde{\triangle}_t^2
		\end{align}
		where $\triangle_t=\max_m l_{t,m}-\min_m l_{t,m}$ is the loss range at time $t$ and $\tilde{\triangle}_t=\max_m \max (l_{t,m},l_{t-1,m})-\min_m \min (l_{t,m},l_{t-1,m})$ is the extended loss range at times $t-1,t$. Thus, we reach
		\begin{align}
			R_T(\Lambda_T)\leq&4\sqrt{2MW\sum_{t=1}^{T}\tilde{\triangle}_t^2}+\E\left[2WD_T\right]
			\nonumber\\&+\sqrt{MW\log(eT)\sum_{t=1}^{T}\tilde{\triangle}_t^2}
			\\\leq& \tilde{O}\left(\sqrt{MW\sum_{t=1}^{T}\tilde{\triangle}_t^2}\right)+2W\E\left[D_T\right],
		\end{align}
		where $\tilde{O}(\cdot)$ is the soft-O notation, which ignores logarithmic terms.
		To bound the expectation of $D_T$, we do the following. Let
		$\tilde{q}_{t,m}$ be the probability that $D_T$ is equal to $d_t$ which is again equal to $\frac{l_{t,m}-l_{t-1,i_{t-1}}}{q_{t,m}}$, i.e.,
		\begin{align}
			\tilde{q}_{t,m}=\mathbb{P}(D_T=d_t=\phi_{t,m}).
		\end{align}
	Then, we have the following expected bound
		\begin{align}
			\E[D_T]\leq& \sum_{t,m}\tilde{q}_{t,m}\frac{\tilde{\triangle}_t}{q_{t,m}}
		\end{align}
		for $\tilde{q}_{t,m}\leq q_{t,m}$ and $\sum_{t,m}\tilde{q}_{t,m}=1$. Since this function is convex and $\tilde{\triangle}_t$ are constant over $m$, we have
		\begin{align}
			\E[D_T]\leq&M\sum_{\tau=1}^{\lceil\epsilon_T^{-1}\rceil}\tilde{\triangle}'_\tau, 
		\end{align}
		where $\tilde{\triangle}'_\tau$ is the $\tau^{th}$ largest range in $\{\tilde{\triangle}_t\}_{t=1}^T$.
		Hence, we get
		\begin{align*}
			R_T(\Lambda_T)\leq&\tilde{O}\left(\sqrt{MW\sum_{t=1}^{T}\tilde{\triangle}_t^2}\right)+O\left(MW\sum_{\tau=1}^{\lceil\sqrt{\frac{T}{MW}}\rceil}\tilde{\triangle}'_\tau\right),
		\end{align*}
		which concludes the proof.
	\end{proof}
\end{theorem}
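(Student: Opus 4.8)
The plan is to specialize the general expected-regret bound of \autoref{thm:bound14} to the bandit parameters of \autoref{rem:sec1b} and then control each of the three resulting terms $\E[\sum_t\eta_tv_t]$, $\E[2W/\eta_T]$ and $\sum_t\epsilon_t\delta_t$ separately, exactly mirroring the telescoping manipulation already carried out for the full-feedback \autoref{thm:11}. Since $\eta_t=\min(\sqrt{W}/\sqrt{V_t},1/D_t)\leq\sqrt{W}/\sqrt{V_t}$ and $V_t-V_{t-1}=v_t$, the standard inequality $v_t/\sqrt{V_t}\leq 2(\sqrt{V_t}-\sqrt{V_{t-1}})$ telescopes $\sum_t\eta_tv_t$ into $2\sqrt{WV_T}$, while $1/\eta_T=\max(\sqrt{V_T}/\sqrt{W},D_T)\leq\sqrt{V_T}/\sqrt{W}+D_T$ turns $2W/\eta_T$ into $2\sqrt{WV_T}+2WD_T$. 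This yields $R_T(\Lambda_T)\leq\E[4\sqrt{WV_T}]+\E[2WD_T]+\sum_t\epsilon_t\delta_t$, and the rest of the argument is about sizing these three pieces.

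First I would dispose of the exploration penalty $\sum_t\epsilon_t\delta_t$. Cauchy--Schwarz gives $\sum_t\epsilon_t\delta_t\leq\sqrt{\sum_t\epsilon_t^2}\sqrt{\sum_t\delta_t^2}$, and because $\epsilon_t^2\leq MW/t$ the first factor is at most $\sqrt{MW\log(eT)}$. Using the definition of $\delta_t$ from \autoref{thm:ptmqtm} (a difference of two convex averages of the losses) shows $\delta_t\leq\triangle_t\leq\tilde{\triangle}_t$, so the whole contribution is $\tilde{O}(\sqrt{MW\sum_t\tilde{\triangle}_t^2})$. Next I would bound the second moment: conditionally on the past the importance-weighted estimator $\phi_{t,m}=\mathbbm{1}_{i_t=m}(l_{t,m}-l_{t-1,i_{t-1}})/q_{t,m}$ satisfies $\E[v_t]=\sum_m (p_{t,m}/q_{t,m})(l_{t,m}-l_{t-1,i_{t-1}})^2$, and invoking $q_{t,m}\geq(1-\epsilon_t)p_{t,m}\geq p_{t,m}/2$ together with $|l_{t,m}-l_{t-1,i_{t-1}}|\leq\tilde{\triangle}_t$ bounds this by $2M\tilde{\triangle}_t^2$, hence $\E V_T\leq 2M\sum_t\tilde{\triangle}_t^2$. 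With Jensen this turns $\E[4\sqrt{WV_T}]$ into $O(\sqrt{MW\sum_t\tilde{\triangle}_t^2})$, so after these steps everything except the residual $2W\E[D_T]$ is already of the desired order.

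I expect the bound on $\E[D_T]$ to be the crux. Because only the played arm makes $\phi_{t,m}$ nonzero, $d_t=|\phi_{t,i_t}|\leq\tilde{\triangle}_t/q_{t,i_t}$, and introducing $\tilde q_{t,m}=\mathbb{P}(D_T=d_t=\phi_{t,m})$ gives the honest constraints $\tilde q_{t,m}\leq q_{t,m}$ and $\sum_{t,m}\tilde q_{t,m}=1$ along with $\E[D_T]\leq\sum_{t,m}\tilde q_{t,m}\tilde{\triangle}_t/q_{t,m}$. The clean way to finish is to read this as a fractional knapsack in the variables $x_{t,m}=\tilde q_{t,m}/q_{t,m}\in[0,1]$: the objective becomes the linear form $\sum_{t,m}x_{t,m}\tilde{\triangle}_t$, while $\sum_{t,m}\tilde q_{t,m}=1$ becomes the budget $\sum_{t,m}x_{t,m}q_{t,m}=1$. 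The subtle point I would be most careful about is that the exploration floor $q_{t,m}\geq\epsilon_t/M\geq\epsilon_T/M$ (using that $\epsilon_t$ is nonincreasing) must be inserted into the \emph{budget} and not into the denominator of the objective; substituting it in the denominator is far too lossy and only yields $M\tilde{\triangle}'_1/\epsilon_T$. Applied to the budget it gives $\sum_{t,m}x_{t,m}\leq M/\epsilon_T$, and since $\tilde{\triangle}_t$ is constant across the $M$ arms of each time, the linear objective over this box-plus-budget polytope is maximized at the vertex that saturates the $M$ copies of the $\lceil\epsilon_T^{-1}\rceil$ largest ranges, i.e. $\E[D_T]\leq M\sum_{\tau=1}^{\lceil\epsilon_T^{-1}\rceil}\tilde{\triangle}'_\tau$. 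Finally $\epsilon_T^{-1}=\sqrt{T/(MW)}$ makes $2W\E[D_T]=O(MW\sum_{\tau=1}^{\lceil\sqrt{T/(MW)}\rceil}\tilde{\triangle}'_\tau)$, which dominates the earlier $\tilde{O}(\sqrt{MW\sum_t\tilde{\triangle}_t^2})$ contribution and produces the stated bound.
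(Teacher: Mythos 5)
Your proposal is correct and follows essentially the same route as the paper: the same specialization of \autoref{thm:bound14} with the telescoping bound on $\sum_t \eta_t v_t$ and $1/\eta_T$, the same Cauchy--Schwarz treatment of $\sum_t\epsilon_t\delta_t$, the same variance bound $\E V_T\leq 2M\sum_t\tilde{\triangle}_t^2$, and the same $\tilde{q}_{t,m}$ construction for $\E[D_T]$. Your fractional-knapsack reading of the $\E[D_T]$ optimization (putting the exploration floor into the budget constraint rather than the objective's denominator) is a more explicit justification of the step the paper dispatches with the terse remark that ``this function is convex and $\tilde{\triangle}_t$ are constant over $m$,'' but it arrives at the identical bound $M\sum_{\tau=1}^{\lceil\epsilon_T^{-1}\rceil}\tilde{\triangle}'_\tau$.
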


\begin{corollary}
	When using \autoref{alg:framework} with the selection in \autoref{rem:sec1b}, we have
	\begin{align*}
		R_T(\Lambda_T)\leq&\tilde{O}\left(\tilde{\triangle}\sqrt{MWT}\right),
	\end{align*}
	where $\tilde{\triangle}\geq \tilde{\triangle}_t$ for all $t\leq T$.
	\begin{proof}
		The result is a direct application of \autoref{thm:21}, when the loss ranges are uniformly bounded.
	\end{proof}
\end{corollary}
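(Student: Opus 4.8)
The plan is to invoke \autoref{thm:21} and then collapse its data-dependent bound under the uniform range hypothesis $\tilde{\triangle}\geq\tilde{\triangle}_t$ for all $t\leq T$. \autoref{thm:21} already supplies
\begin{align*}
	R_T(\Lambda_T)\leq\tilde{O}\left(MW\sum_{\tau=1}^{\lceil\sqrt{T/(MW)}\rceil}\tilde{\triangle}'_\tau\right),
\end{align*}
so the entire task reduces to bounding the order-statistic sum $\sum_{\tau}\tilde{\triangle}'_\tau$ from above.

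First I would note that each $\tilde{\triangle}'_\tau$ equals some $\tilde{\triangle}_t$ (it is the $\tau^{\text{th}}$ largest among $\{\tilde{\triangle}_t\}_{t=1}^T$), so the uniform bound transfers verbatim to give $\tilde{\triangle}'_\tau\leq\tilde{\triangle}$ for every $\tau$. Replacing each summand by $\tilde{\triangle}$ and counting the $\lceil\sqrt{T/(MW)}\rceil$ terms, then using $\lceil x\rceil\leq x+1$, yields $\sum_\tau\tilde{\triangle}'_\tau\leq\tilde{\triangle}(\sqrt{T/(MW)}+1)$. Multiplying through by $MW$ and applying the identity $MW\sqrt{T/(MW)}=\sqrt{MWT}$ then produces a bound of the form $\tilde{\triangle}\sqrt{MWT}+\tilde{\triangle}\,MW$.

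Finally I would observe that, for the growth regime in $T$, the leading term is $\tilde{\triangle}\sqrt{MWT}$ while the residual $\tilde{\triangle}\,MW$ is independent of $T$ and hence lower order; it is therefore absorbed into the $\tilde{O}(\cdot)$, delivering $R_T(\Lambda_T)\leq\tilde{O}(\tilde{\triangle}\sqrt{MWT})$. There is no genuine obstacle here---the statement is a one-line specialization of the data-dependent bound---so the only points demanding a moment of care are confirming that the uniform range bound passes to the order statistics (immediate, since each ordered value is one of the $\tilde{\triangle}_t$) and verifying that the factor $MW$ collapses down to $\sqrt{MW}$ after the product with $\sqrt{T/(MW)}$, leaving no stray $MW$ in the dominant term.
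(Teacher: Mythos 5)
Your proposal is correct and matches the paper's (one-line) proof in substance: both simply specialize \autoref{thm:21} under the uniform range bound, and your computation $MW\cdot\lceil\sqrt{T/(MW)}\rceil\cdot\tilde{\triangle}\leq\tilde{\triangle}\sqrt{MWT}+\tilde{\triangle}MW$ with the $T$-independent residual absorbed (noting $MW\leq\sqrt{MWT}$ once $T\geq MW$, which the choice of $\epsilon_t$ already presumes) is exactly the intended filling-in of that line.
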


%	\section{Conclusion}\label{sec:conc}
%	In conclusion, we have successfully created a completely online, generalized algorithm for the adversarial multi-armed bandit problem. With suitable design, it is possible to compete against a subset of the all possible bandit arm selection sequences that is appropriate for a given problem. By combining the similar sequences together in each step of the algorithm, and creating appropriate equivalence classes, we can compete against the sequences with minimal redundancy and in a computationally efficient manner. We showed that with appropriate design of the algorithmic parameters, we achieve regret bounds that are translation-free and scale-free of the bandit arm losses.

\bibliographystyle{ieeetran}
\bibliography{double_bib}	
	
%\clearpage
%\begin{appendices}
%\section{Proof of \autoref{lem:uB1}}\label{pro:lem:uB1}

%	\section{Proof of \autoref{lem:uB2}}\label{pro:lem:uB2}

%	\section{Proof of \autoref{lem:uB3}}\label{pro:lem:uB3}

%	\section{Proof of \autoref{lem:uB4}}\label{pro:lem:uB4}

%	\section{Proof of \autoref{thm:bound1}}\label{pro:thm:bound}

%\end{appendices}
	
\end{document}